\theoremstyle{plain}
\newtheorem{theorem}{Theorem}[section]
\newtheorem{lemma}[theorem]{Lemma}
\newtheorem{corollary}[theorem]{Corollary}
\theoremstyle{definition}
\newtheorem{assumption}[theorem]{Assumption}
\theoremstyle{remark}
\newcolumntype{C}{>{\centering\arraybackslash}X}
\newcommand{\ie}{\emph{i.e.}\xspace}
\newcommand{\sys}{\textsc{TAH-Quant}\xspace}
\renewenvironment{abstract}{%
  \begin{center}{\normalfont\Large\bfseries\abstractname}\end{center}%
  \normalsize
}{}
\title{\sys: Effective Activation Quantization in Pipeline Parallelism over Slow Network}
\author{
    Guangxin He$^{*\dag}$, Yuan Cao$^{*\ddag}$, Yutong He$^{\ddag}$, Tianyi Bai$^{\dag}$\\
    \vspace{-0.5em}
    Kai Chen$^{\dag}$, Kun Yuan$^{\ddag}$, Binhang Yuan$^\dag$\\
    \dag \ HKUST \ \ \ \ \ddag \ Peking University 
}
\date{}
\begin{document}
\maketitle

\vspace{-0.75em}
\vspace{-2ex}
\begin{abstract}
\noindent Decentralized training of large language models offers the opportunity to pool computational resources across geographically distributed participants, but is often bottlenecked by network communication, particularly under pipeline parallel settings. While pipeline parallelism partitions model layers across devices to handle large-scale models, it necessitates frequent communication of intermediate activations, creating challenges when network bandwidth is limited.
To address these issues, we propose \sys (\underline{\textbf{T}}ile-wise \underline{\textbf{A}}daptive \underline{\textbf{H}}adamard \underline{\textbf{Quant}}ization), a novel activation quantization framework for pipeline parallelism. \sys integrates fine-grained tile-wise quantization, entropy-guided tile-wise adaptive bit allocation for optimal bit usage, and a Hadamard-based transformation with pivot swapping to effectively suppress outliers. Compared with token-level allocation, the tile-wise allocator assigns precision at the granularity of small channel windows within each token, reducing quantization error under the same bit budget. We prove that pipeline parallel training equipped with \sys maintains a convergence rate of $\mathcal{O}(1/\sqrt{T})$, matching that of vanilla stochastic gradient descent. Extensive experiments demonstrate that \sys achieves an aggressive activation quantization ratio of 3--4 bits, providing up to $4.3\times$ throughput speedup over uncompressed FP32 and up to $1.33\times$ wall-clock speedup over AQ-SGD, while preserving training convergence, avoiding AQ-SGD's activation-cache overhead, and generalizing well across various training scenarios.
\end{abstract}

\section{Introduction}

Decentralized or open collaborative training of large language models (LLMs) has recently gained significant attention as it enables the pooling of computational resources across multiple geo-distributed participants, thus facilitating the training of models that exceed the capacity of any single resource contributor~\cite{ryabinin2020towards,yuan2022decentralized,gandhi2024improving}. However, a major barrier to these approaches is network communication: unlike specialized clusters equipped with high-speed interconnects, decentralized settings typically rely on slower networks, severely constraining training efficiency~\cite{wang2022fine,wang2023cocktailsgd}. On the other hand, scaling LLM training in the state-of-the-art scale necessitates distributed parallel training --- particularly pipeline parallelism~\cite{huang2019gpipe,narayanan2019pipedream,narayanan2021memory}, which partitions model layers across multiple stages to support training large-scale models. Yet, pipeline parallelism inherently requires frequent transmission of activations and the corresponding gradients between adjacent pipeline stages. In this paper, we explore \textit{how to effectively compress the communication volume to accommodate pipeline parallelism over a slow network.}

Enabling efficient activation compression for pipeline parallelism over slow network connections has significant implications for democratizing large-scale LLM training~\cite{yuan2022decentralized,wang2022fine}. Currently, the capability to train state-of-the-art models remains concentrated among institutions equipped with specialized high-performance computing resources. Effectively addressing network communication bottlenecks would substantially reduce barriers to participation, allowing a broader array of contributors, including universities, startups, and individuals, to collaboratively train or fine-tune LLMs~\cite{douillard2025streaming}.

On the other hand, a significant obstacle arises from the fact that naive activation compression (e.g., quantization~\cite{han2016deep,hubara2017quantized}) methods can negatively affect training convergence. Unlike gradient compression in data parallelism --- where quantization errors typically behave as unbiased noise compatible with optimization procedures, compressing intermediate activations directly influences the neural network's forward computation, consequently introducing bias into gradient estimates in the later backward propagation~\cite{evans2021ac,chakrabarti2019backprop}. Specifically, in pipeline parallel training, compression errors incurred during activation transmission propagate through nonlinear transformations and can distort gradient calculations during the backward pass~\cite{wang2022fine}. Thus, aggressively reducing activation precision without careful management will result in performance degradation or even training divergence.


To restrict the error propagation introduced by activation quantization, prior efforts, such as AQ-SGD~\cite{wang2022fine}, have attempted to address this issue by compressing the changes in activations between training epochs rather than the activations themselves, thereby providing theoretical convergence guarantees leveraging the help from the error compensation. Although effective in preserving model accuracy, AQ-SGD requires storing previous activations for the whole dataset to compute these changes, resulting in substantial memory overhead. Such an approach poses practical limitations, especially in resource-constrained environments for the large-volumes of training data where storage capacity and system complexity are critical considerations.

In this paper, we solve this problem with a new approach for effective activation quantization in pipeline parallelism. In particular, we make the following key contributions:

\textbf{\underline{Contribution 1.}}  We propose \sys, an activation quantization approach to alleviate communication bottlenecks in pipeline-parallel training of LLMs. Specifically, our method includes: (\underline{\textbf{i}}) a fine-grained, tile-wise group quantization technique for localized precision control, effectively limiting quantization error; (\underline{\textbf{ii}}) an entropy-guided, tile-wise adaptive bit allocation method that dynamically assigns precision based on activation distribution characteristics at a granularity that strictly refines token-level allocation, further optimizing the compression efficiency; and (\underline{\textbf{iii}}) a Hadamard-based outlier suppression transform enhanced by a pivot element swap, which effectively mitigates quantization errors arising from extreme activation values. Collectively, these carefully designed techniques enable efficient, accurate low-bit quantization of activations, substantially improving the practicality of decentralized and collaborative LLM training.

\textbf{\underline{Contribution 2.}} We analyze the effect of \sys's activation compression on pipeline-parallel optimization under standard stochastic assumptions, together with an empirically validated condition on the resulting compression error. Concretely, we prove that pipeline parallel training equipped with \sys converges at a rate of $\mathcal{O}(1/\sqrt{T})$, matching that of vanilla SGD.

\textbf{\underline{Contribution 3.}} We conduct extensive experiments on various LLM training tasks, including fine-tuning (\texttt{GPT2-XL}), instruction tuning (\texttt{Qwen2.5-3B}), and from-scratch pretraining (\texttt{LLaMA-3.2-1B}). We show that \sys can aggressively quantize activations to 3--4 bits without sacrificing convergence relative to the state-of-the-art, \ie, \texttt{AQ-SGD}, while introducing no additional activation-cache storage overhead. These results demonstrate that \sys is broadly applicable to different training tasks.

\section{Preliminary and Related Work}

\textbf{Decentralized training of LLM.} Decentralized training of LLMs has garnered significant attention as an interesting attempt to democratize access to large-scale LLM training development~\cite{ryabinin2020towards,borzunov2022training,borzunov2023distributed,gandhi2024improving,blagoev2025skippipe}. Early efforts demonstrated the feasibility of collaborative training across geographically distributed participants with constrained resources under the scope of data parallelism~\cite{diskin2021distributed,borzunov2022training}, where various effective gradient compression methods have been explored~\cite{wang2023cocktailsgd}. To further scale out the training computation, more advanced modes of parallel strategies have been integrated~\cite{yuan2022decentralized,ryabinin2023swarm,lu2024position,strati2024ml}, for example, Yuan et al.~\cite{yuan2022decentralized} addressed the challenges of training foundation models in heterogeneous environments by introducing a scheduling algorithm that optimally allocates computational tasks across decentralized GPUs;
Ryabinin et al.~\cite{ryabinin2023swarm} proposed SWARM Parallelism, where temporary randomized pipelines between nodes are adaptively rebalanced to handle dynamic efficient training of large Transformer models using preemptive instances with limited network bandwidth. Douillard et al.~\cite{douillard2025streaming} improve DiLoCo with sequential synchronization, comm--compute overlap, and quantized exchange. Most recently, Ramasinghe et al.~\cite{ramasinghe2025subspace} propose \emph{Subspace Networks}, which reduce model-parallel activation traffic by constraining projection weights to a shared low-rank subspace. This direction is complementary to \sys: Subspace Networks reduce communication through weight-parameterization constraints, whereas \sys is a drop-in activation-compression module that leaves the model parameterization unchanged.

\textbf{Activation compression in training.} Activation compression techniques have been studied to reduce memory and computational overhead in model training~\cite{liu2021exact,bersatti2020neural,georgiadis2019accelerating,fu2020don,liu2022gact,chendropit,bian2024does}. Concretely, inherent sparsity in activation has been studied to minimize storage and computation in neural networks. For example, Zhang et al.~\cite{zhang2024exploring} investigate the natural occurrence of sparse activations in pretrained Transformers and dynamically alternate between sparse and dense training phases to enhance pretraining efficiency~\cite{rhu2018compressing,jiang2022back,zhang2024exploring,lilazy}.  On the other hand, quantization-based methods~\cite{evans2020jpeg,liu2021exact,wang2023division} reduce the precision of activations to lower bit-widths, thereby decreasing memory usage. For example, Han et al.~\cite{han2016deep} presented Deep Compression, combining pruning, trained quantization, and Huffman coding. Hubara et al.~\cite{hubara2017quantized} explore training neural networks with low-precision weights and activations. Chakrabarti et al.~\cite{chakrabarti2019backprop} propose backpropagation with approximate activations for memory-efficient training. Chen et al.~\cite{chen2021actnn} introduced ActNN, employing 2-bit activation compressed training.

\textbf{Quantization for LLM.} Quantization has emerged as a key technique for serving LLMs efficiently by reducing the precision of weights~\cite{lin2024awq,frantar2022gptq}, activations~\cite{xiao2023smoothquant}, and KV-cache~\cite{liu2024kivi} for the process of generative inference, parameter-efficient fine-tuning~\cite{dettmers2023qlora}, and large-scale pretraining~\cite{you2024parameterize,liu2024llm}. For example, AWQ~\cite{lin2024awq} quantizes the LLM weights by identifying a small subset of ``salient'' weight channels and scales them up before quantization, thereby preserving accuracy even at 4-bit weight precision. KIVI~\cite{liu2024kivi} proposes a tuning-free 2-bit quantization of the KV cache (with per-channel asymmetric scaling), dramatically reducing memory and enabling longer context lengths with negligible impact on generation quality. QLoRA~\cite{dettmers2023qlora} demonstrated that a 4-bit quantized base model can be \emph{fine-tuned} via low-rank adapters to reach the same performance as full \texttt{FP16} fine-tuning. LLM-QAT~\cite{liu2024llm} introduces a data-free QAT scheme that allows 4-bit quantization of weights, activations, and even the KV cache while preserving performance for training. One essential problem in such quantization methods is how to effectively resolve the issues of outliers in the quantization group~\cite{lin2024duquant,hu2025ostquant,you2024parameterize}. An especially simple yet effective transformation for outlier suppression is the \textit{Hadamard transform}~\cite{theodoridis2009feature}. Formally, the $N \times N$ Hadamard matrix $\mathbf{H}_N \in \{\pm 1\}^{N\times N}$ is defined such that $\mathbf{H}_N \mathbf{H}_N^T = N\mathbf{I}_N$ (so $\frac{1}{\sqrt{n}}\mathbf{H}_N$ is an orthonormal matrix). Multiplying a vector with dimension $N$ by $\mathbf{H}_N$ will evenly redistribute the vector's components across $N$ dimensions.
Recent quantization studies leverage the \textit{Hadamard transform} to suppress outliers and reduce quantization error, mainly for tensor-wise weight quantization in generative inference scenarios, such as QuaRot~\cite{ashkboos2024quarot} and SpinQuant~\cite{liu2024spinquant}. In contrast, we apply lightweight Hadamard transforms for \emph{activation communication} in pipeline-parallel training at a finer tile-wise granularity.

\section{Activation Quantization}

Pipeline-parallel training~\cite{huang2019gpipe,narayanan2019pipedream, narayanan2021memory} requires communication of intermediate activations between devices. This communication can become a key bottleneck on slow interconnects.
To alleviate this by quantization-based compression, we leverage three carefully-designed mechanisms \textit{in order to reduce the quantization error}, including: (\underline{\textbf{i}}) fine-grained tile-wise group quantization for localized precision control (Section~\ref{sec:tile}); (\underline{\textbf{ii}}) an entropy-guided tile-wise adaptive bit-width allocation (Section~\ref{sec:adap}); and (\underline{\textbf{iii}}) a Hadamard-transform-based outlier suppression with a pivot element swap (Section~\ref{sec:tran}). We also discuss how we integrate the proposed \sys quantization method in pipeline parallel training in Section~\ref{sec:whole}. We detail them below.

\subsection{Fine-Grained Tile-Wise Group Quantization}
\label{sec:tile}

First, we introduce a fine-grained, tile-wise group quantization scheme for localized precision control. Specifically, instead of quantizing the entire activation tensor with a single set of parameters, we partition it into small tiles and quantize each tile independently. For example, consider an activation tensor $\mathbf{a}$ of shape $B \times S \times C$, i.e., $\mathbf{a} \in \mathbb{R}^{B \times S \times C}$, where $B$, $S$, and $C$ denote the batch size, sequence length, and number of channels (i.e., model dimension), respectively. We partition this tensor along the channel dimension into multiple tiles by grouping contiguous channels within each token. Each such tile (i.e., quantization group) can be noted as $\mathbf{a}_{i,j,t}\in  \mathbb{R}^{G}$, where $G$ is the quantization group size determined by $G=\frac{C}{N_t}$, $N_t$ is the number of partitions of all the channels, and $i=1,\ldots, B$, $j=1,\ldots, S$, $t=1,\ldots, N_t$ are the indices for each tile-wise quantization group. Note that each tile will form a separate quantization group with its own scale and zero-point. This approach ensures that each group is quantized using an optimal dynamic range, improving low-bit accuracy. By confining quantization error to small groups, we avoid the coarse tensor-wise scale being dominated by a few extreme values.

\subsection{Tile-Wise Adaptive Bit Allocation}
\label{sec:adap}

The fine-grained grouping addresses local range variation; however, even within a single token the activation values may have non-uniform importance: some contiguous channel windows are smooth while others contain sharp peaks. We therefore introduce an entropy-based, tile-wise adaptive precision allocation strategy that dynamically adjusts the quantization bit width at the granularity of small channel-window allocation tiles within each token rather than entire tokens.

\textbf{Tile decomposition.}
Let $A\in\mathbb{Z}_{>0}$ denote the allocation-tile size, with $A\le C$ and $A$ chosen to align with the quantization tile boundaries of Section~\ref{sec:tile}. Each token is split into $M=C/A$ non-overlapping channel-window allocation tiles. We reshape the activation tensor
\[
\mathbf{a} \in \mathbb{R}^{B\times S\times C}
\;\longmapsto\;
\tilde{\mathbf{a}} \in \mathbb{R}^{B\times (SM)\times A},
\]
so that each row $\tilde{\mathbf{a}}_{i,j'} \in \mathbb{R}^{A}$ represents one allocation tile, indexed by $j' = 1,\dots, SM$, that combines a token index and a channel-window index. Setting $A=C$ (so $M=1$) recovers the previously studied token-level allocation as a strict special case; choosing $A<C$ yields finer precision control. In our default configuration, $A=G$, so each allocation tile is also a quantization group.

\textbf{Per-tile entropy.}
For every allocation tile $\tilde{\mathbf{a}}_{i,j'}\in\mathbb{R}^A$ we form its normalized magnitude distribution
\begin{equation}
    p_k \;=\; \frac{|\tilde{a}_{i,j',k}|}{\|\tilde{\mathbf{a}}_{i,j'}\|_1+\epsilon}, \quad k=1,\ldots, A,
\end{equation}
where $\epsilon$ is a small positive constant for numerical stability, and define
\begin{equation}
\mathcal{H}\big(\tilde{\mathbf{a}}_{i,j'}\big) \;=\; \sum_{k=1}^A p_k \log\big(p_k+\varsigma\big),
\end{equation}
with $\varsigma$ a small positive constant avoiding zero arguments to the logarithm. As before, $\mathcal{H}$ is high when energy is spread evenly across the $A$ channels and low when the tile contains a sharp outlier.

\textbf{Top-$p\%$ tile selection.}
We rank the $SM$ allocation tiles of every sample by their entropy and assign \texttt{INT4} (high precision) to the top-$p\%$ \emph{highest-entropy} tiles and \texttt{INT3} (low precision) to the rest. The intuition is unchanged from the token-level argument: high-entropy tiles lack a single channel that the Hadamard transform of Section~\ref{sec:tran} can isolate, so they need extra precision; low-entropy tiles concentrate energy on a few channels and thus tolerate aggressive quantization \emph{after} the outlier-suppression transform. The resulting 1-bit per-tile bit map is packed with the quantized payload as metadata.

\textbf{Why tiles rather than whole tokens.}
Empirically we observe that, within a single token, the entropy of different channel-window tiles can differ by more than an order of magnitude --- some tiles contain a strong outlier while others are perfectly smooth. Whole-token allocation forces both kinds of tiles to share the same bit budget, wasting bits on smooth regions and starving spiky regions. Tile-wise allocation avoids this mismatch: a token may now be encoded as a heterogeneous mixture of \texttt{INT4} and \texttt{INT3} tiles. Since tile-wise allocation reduces to token-level allocation when $A=C$ (every allocation tile is a whole token), the assumption used by our convergence proof (Assumption~\ref{asp:contractive}) carries over verbatim --- in fact the per-bit quantization error can only decrease.

\subsection{Hadamard-Based Outlier Suppression Transform}
\label{sec:tran}

Outliers in the activation values can severely degrade the accuracy of low-bit quantization even within a small quantization group. To mitigate quantization error caused by extreme outliers in activation groups, we propose an adaptive Hadamard transform strategy, which consists of three steps: (\underline{\textbf{i}}) a \textit{heuristic-based outlier detection} to decide if transform is needed, (\underline{\textbf{ii}}) a \textit{Hadamard transform with pivot element swap} to redistribute the outlier values in the quantization group, and (\underline{\textbf{iii}}) an \textit{asymmetric uniform quantization} of the values in the quantization group.

\textbf{Outlier detection heuristic}: Given any quantization group $\mathbf{a}_{i,j,t} = [a^{i,j,t}_1, a^{i,j,t}_2, \dots, a^{i,j,t}_G] \in \mathbb{R}^G$, where $G=\frac{C}{N_t}$ is the quantization group size. For the rest parts in Section~\ref{sec:tran}, we simplify the notation as $\mathbf{a}_{i,j,t} = \bm{\alpha} = [\alpha_1, \alpha_2, \dots, \alpha_G] \in \mathbb{R}^G$ to introduce the quantization method within each tile. In order to detect whether an outlier is present, we define the following heuristic:
\begin{equation}\label{eq:tau}
    r = \frac{|\alpha^{(1)}|}{|\alpha^{(2)}|+\varrho}
\end{equation}
Where $\alpha^{(1)}$ and $\alpha^{(2)}$ represent the elements in $\bm{\alpha}$ with the largest and the second largest absolute values, $\varrho$ is a small positive constant. If $r$ exceeds a threshold $\tau$ (empirically, we set $\tau = 2.0$), we will deem $\mathbf{a}_{i,j,t}$ to contain an outlier and apply the Hadamard-based transform as we will introduce below; otherwise, we skip this transform for that tile. We choose $\tau$ via a sensitivity study: $\tau=2.0$ consistently improves convergence compared to always applying ($\tau=0$) or never applying ($\tau=\infty$) the transform; detailed results are provided in Appendix~\ref{app:tau}.

\textbf{Hadamard transform with pivot element swap}: For a group identified to have an outlier, we perform a \textit{pivot element swap} to align the pivot (the element with the largest absolute value) with the Hadamard matrix structure. Let $d = \arg\max_k |\alpha_k|$ denote the pivot element index (i.e., $\alpha_d=\alpha^{(1)}$). We define a permutation matrix $\mathbf{P}_d \in \mathbb{R}^{G \times G}$ that swaps the first and $d$-th coordinates, which yields a permuted vector by multiplying this permutation matrix:
$$[\alpha_d, \alpha_2, \dots, \alpha_1, \ldots, \alpha_G] = [\alpha_1, \alpha_2, \dots, \alpha_G] \mathbf{P}_d =  \bm{\alpha}\mathbf{P}_d $$
Next, we multiply this transformed vector by a Hadamard matrix $\mathbf{H}_G \in \{\pm 1\}^{G \times G}$ to redistribute the values and resolve the issue of outliers:
\begin{equation}
    \dot{\bm{\alpha}} = \bm{\alpha}\mathbf{P}_d \frac{1}{\sqrt{G}}\mathbf{H}_G
\end{equation}
After applying this transform, the extreme value in the original $\bm{\alpha}$ will be redistributed across all components in the transformed vector $\dot{\bm{\alpha}}$. This transform greatly reduces the dynamic range of the group: the formerly pivot value is no longer isolated in a single position, yielding a more balanced tile for the activation vector. As a result, the quantization error can be reduced, since a tighter quantization scale can represent the values with higher precision. Notably, because $\mathbf{H}_G$ is orthogonal (i.e., $\mathbf{H}_G \mathbf{H}_G^T = G\mathbf{I}_G$), we can later invert the transform by applying $\mathbf{H}_G^T$ to the de-quantized values when recovery of the original domain is required.

\textbf{Asymmetric uniform quantization}: After the above two steps, the activation values should be uniformly distributed and centered if the outlier issue once existed. Thus, we can apply the a standard asymmetric quantizer (i.e.,~\cite{you2024parameterize}) --- if the computed heuristic $r \le \tau$, we apply this quantizer for the original vector $\bm{\alpha}$; otherwise, we apply this quantizer for the transformed vector $\dot{\bm{\alpha}}$.

\subsection{Computational Overhead}

Following notations in Sections~\ref{sec:tile} and~\ref{sec:adap}, we analyze the computational overhead theoretically and empirically. Pivot swapping selects the top-2 entries per quantization tile and performs a conditional swap, costing $\mathcal{O}(G)$ per tile and $\mathcal{O}(BSN_tG)$ overall (implemented via parallel tensor ops). Entropy computation reduces over $A$ channels per allocation tile, giving $\mathcal{O}(BSC)$ in aggregate, while tile-wise bit allocation ranks $SM=S\cdot C/A$ entropies per sample, costing $\mathcal{O}(B\,SM\log(SM))$, which remains negligible relative to attention and FFN even for small allocation tiles. For outlier tiles, the Hadamard transform applies a $G\times G$ multiplication, costing $\mathcal{O}(G^2)$ per quantization tile and at most $\mathcal{O}(BSN_tG^2)$. Under the default 80\% \texttt{INT4} + 20\% \texttt{INT3} setting with $A=G=64$, \sys transmits about $4.41$ bits per activation element, including roughly $0.61$ bits for scale, zero-point, and bit-map metadata. Heterogeneous precisions and metadata are packed into a contiguous \texttt{uint8} byte stream before communication, so the communication layer observes a fixed-shape buffer rather than variable-length tensors. In contrast, training step time is dominated by attention $\mathcal{O}(BSC(S+C))$ and FFN $\mathcal{O}(BSC^2)$. Empirically, profiling a 4-stage pipeline with micro-batch size 2 shows \sys adds only $\sim 1\%$ overhead with the default $A=G=64$.

\subsection{\sys in Pipeline Parallel Training}
\label{sec:whole}

Given the carefully designed \sys quantization method, it is straightforward to integrate it into standard pipeline-parallel training. We illustrate this process in Algorithm~\ref{alg:pp}. For clarity, we present a two-stage pipeline, which can be easily extended to an arbitrary number of stages. Following \texttt{AQ-SGD}, we use a simple fixed-point (naive) compressor for gradients in the backward pass. This choice is motivated by system efficiency rather than an algorithmic limitation. Backward propagation typically incurs substantially more computation than forward propagation, providing more opportunity for computation--communication overlap, so a higher-bit naive quantizer (e.g., 6--8 bits) is usually sufficient in practice. We emphasize that \sys also applies to backward gradients under ultra-low precision. An empirical comparison is provided in Appendix~\ref{app:backward}.

\begin{algorithm}[H]
   \caption{\sys in a two-stage pipeline parallel training.}
   \label{alg:pp}
\begin{algorithmic}[1]\label{alg_1}
   \STATE {\bfseries Initialize:} sub-network $a(-)$ weights $\mathbf{x}^{(a)}$, sub-network $b(-)$ weights $\mathbf{x}^{(b)}$, optimizer $\rho$.
    \FOR{t = 1, \ldots, T}
        \STATE Randomly sample training batch $\xi_t$.\\
        \textcolor{green}{// Forward propagation:}
        \STATE Machine $a$ sends the quantized output activations $Q_{\textsc{TAH-Quant}}\left(a(\xi_t, \mathbf{x}_{t}^{(a)})\right)$ to Machine $b$.
        \STATE Machine $b$ dequantizes the received activation $Q_{\textsc{TAH-Quant}}\left(a(\xi_t, \mathbf{x}_{t}^{(a)})\right)$.\\
        \textcolor{green}{// Backward propagation:}
        \STATE Machine $b$ sends the quantized gradients w.r.t the activations $Q_{\textsc{Naive}}\left(\nabla_{a} (F\circ b)\vert_{\xi_t}\right)$ back to Machine $a$.
        \STATE Machine $a$ dequantizes the received gradient w.r.t the activations $Q_{\textsc{Naive}}\left(\nabla_{a} (F\circ b)\vert_{\xi_t}\right)$.\\
        \textcolor{green}{// Parameter updates:}
       \STATE  Machine $a$ updates its parameters with gradients $\hat{\mathbf{g}}^t\left(\mathbf{x}^{(a)}\right)$ using optimizer $\rho$.
       \STATE Machine $b$ updates its parameters with gradients $\hat{\mathbf{g}}^t\left(\mathbf{x}^{(b)}\right)$ using optimizer $\rho$.
    \ENDFOR
    \STATE {\bfseries Output:} $\mathbf{x} = (\mathbf{x}_{T}^{(a)}, \mathbf{x}_{T}^{(b)})$
\end{algorithmic}
\end{algorithm}

\begin{figure}[h]
	\centering
        \begin{subfigure}[b]{0.235\textwidth}
		\includegraphics[width=\textwidth]{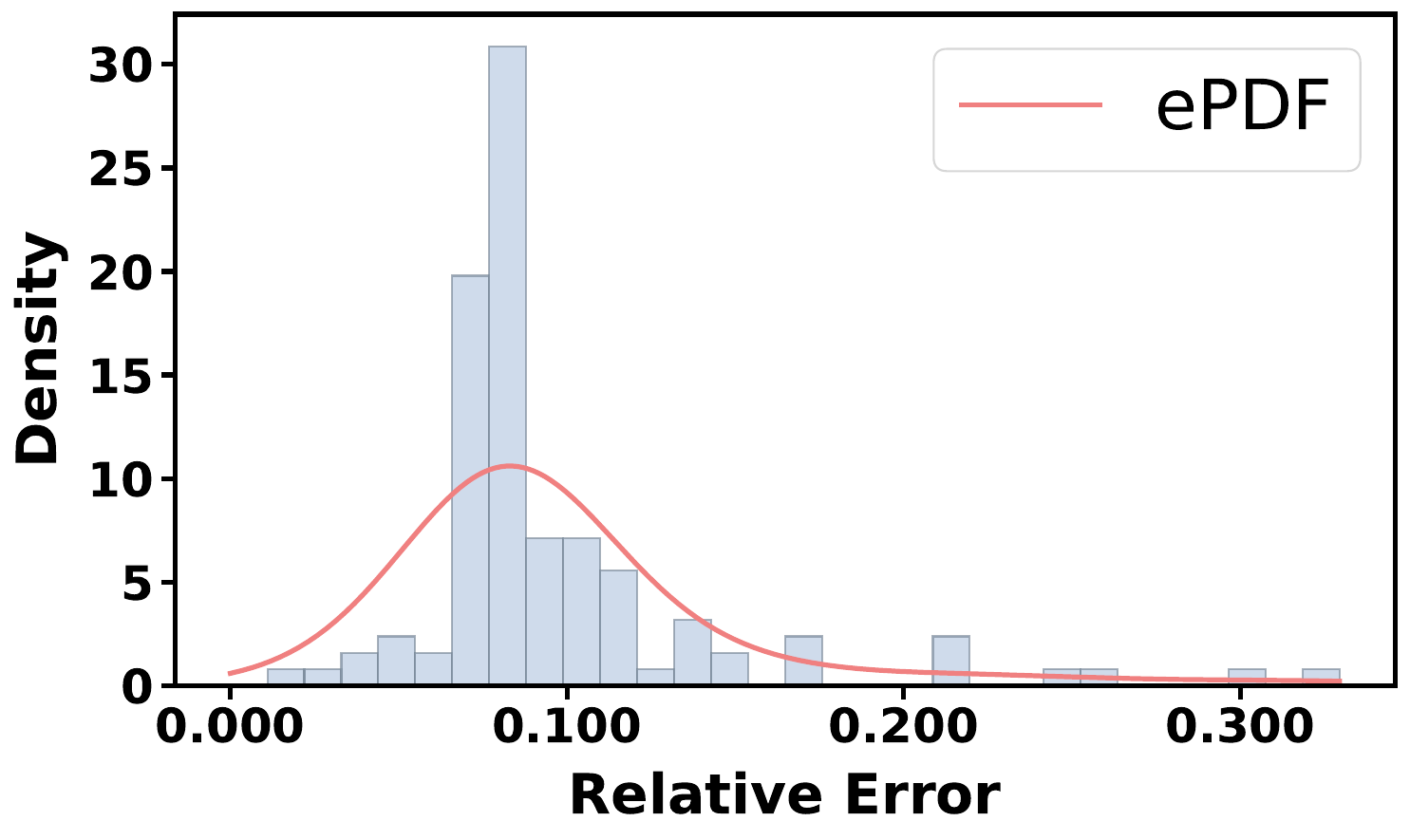}
	\caption{step wise; tile size 64.}
	\label{fig:cgk1}
        \end{subfigure}
	\begin{subfigure}[b]{0.235\textwidth}
        \includegraphics[width=\textwidth]{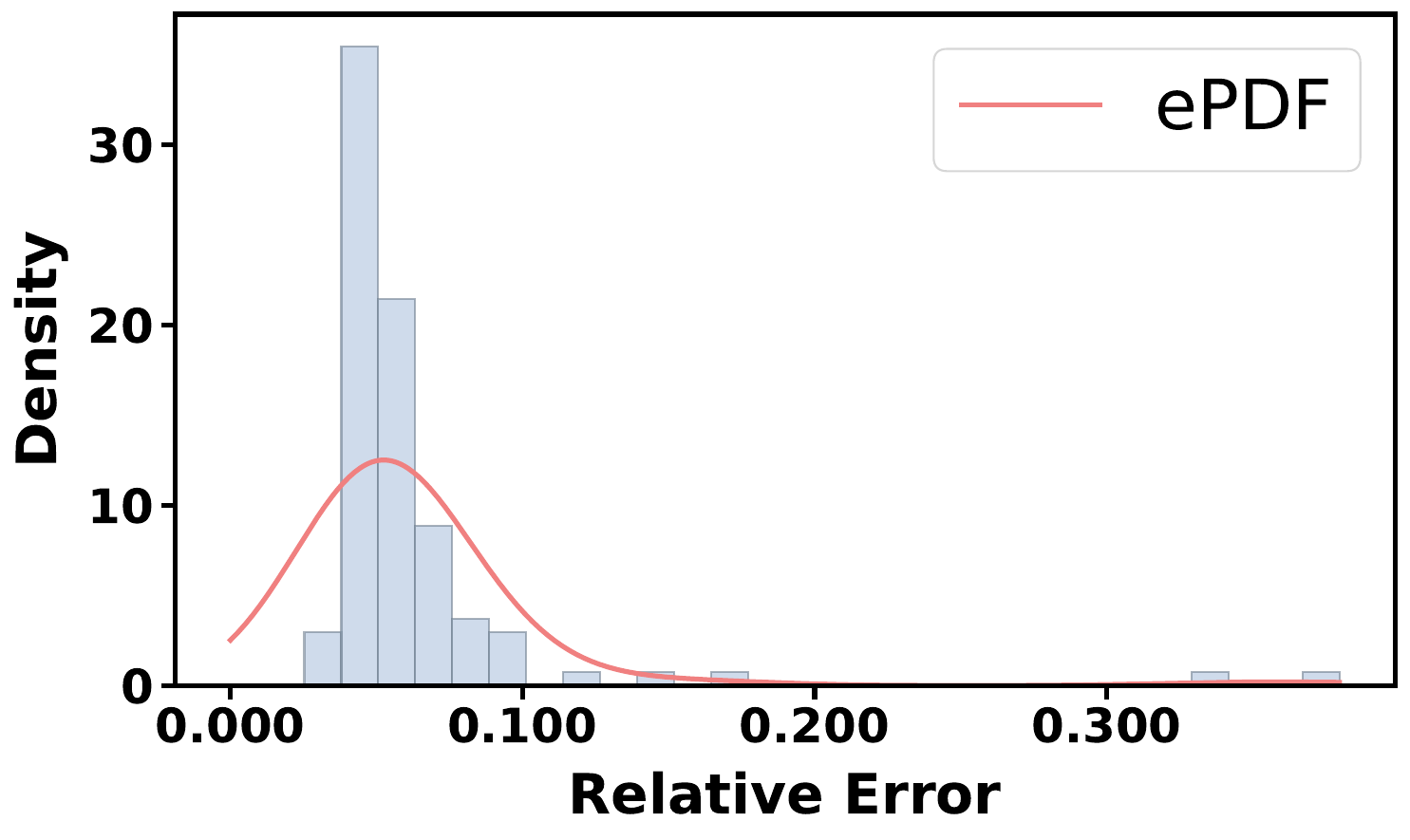}
	\caption{step wise; tile size 32.}
	\label{fig:cgk2}
        \end{subfigure}
         \begin{subfigure}[b]{0.235\textwidth}
		\includegraphics[width=\textwidth]{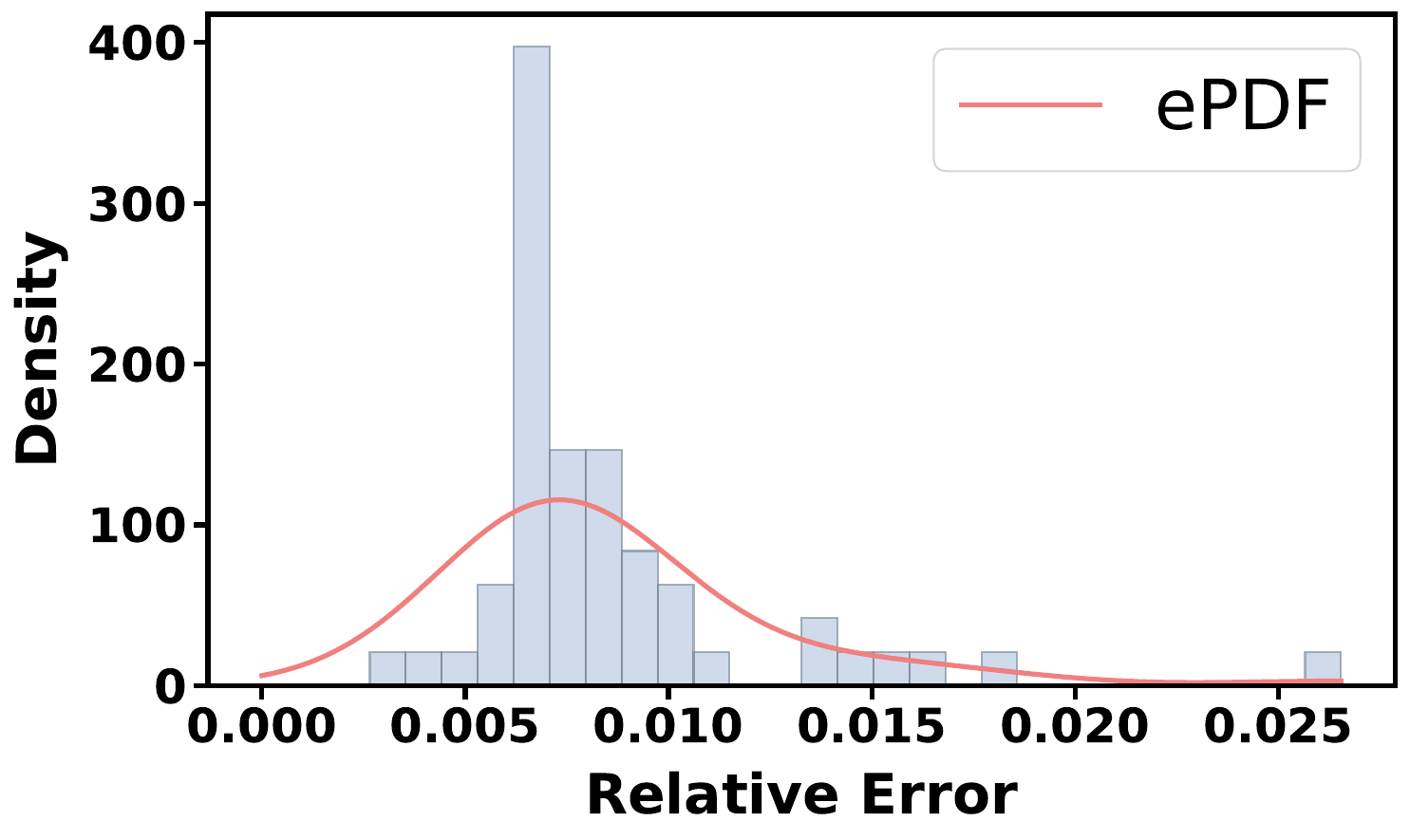}
	\caption{full dataset; tile size 64.}
	\label{fig:ecgk1}
        \end{subfigure}
         \begin{subfigure}[b]{0.235\textwidth}
		\includegraphics[width=\textwidth]{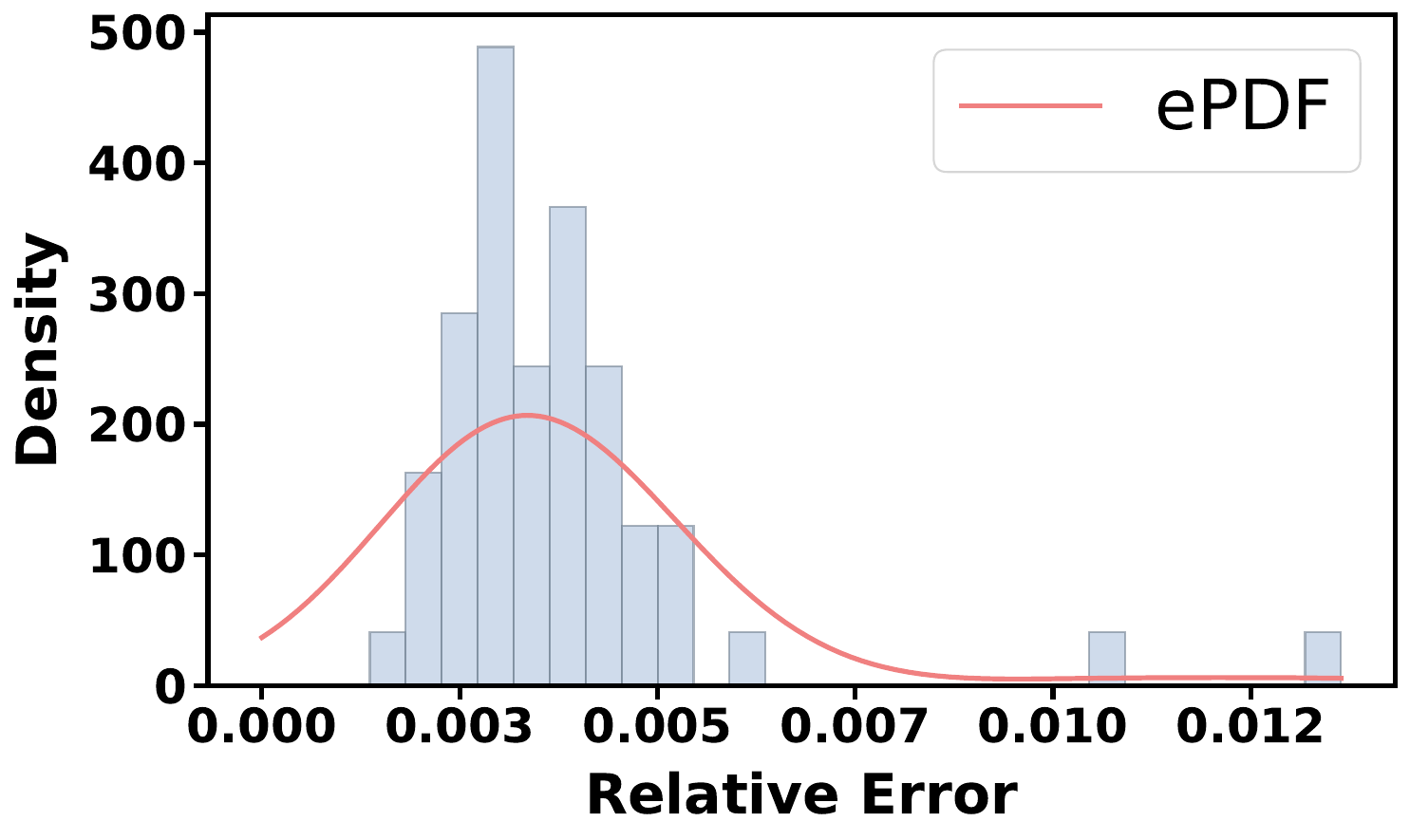}
	\caption{full dataset; tile size 32.}
	\label{fig:ecgk2}
        \end{subfigure}
        \caption{Empirical justification of Assumption~\ref{asp:contractive}.}
\end{figure}

\section{Theoretical Analysis}

In this section, we present convergence guarantees for \sys, which aims to solve the following stochastic optimization problem in a pipeline-parallel fashion:
\begin{align}
\label{prob}
    \min_{\mathbf{x} \in \mathbb{R}^d} \quad \mathbb{E}_{\xi \in \mathcal{D}}[F(\mathbf{x}; \xi)]
\end{align}
where $\mathbf{x}$ denotes the model weights distributed across different pipelines, and $\xi$ represents random data drawn from the distribution $\mathcal{D}$. We denote $\nabla F(\mathbf{x}; \xi)$ as the stochastic gradient and $\nabla f(\mathbf{x})$ as the full-batch gradient. Without loss of generality, we consider using momentum SGD as the optimizer $\rho$ in Algorithm~\ref{alg:pp}:
\begin{align}
    \mathbf{m}^t &= (1-\beta_1)\mathbf{m}^{t-1}+\beta_1\hat{\mathbf{g}}^t, \\
    \mathbf{x}^{t+1} &= \mathbf{x}^t-\eta \mathbf{m}^t,
\label{adaptrule}
\end{align}
where $\beta_1 \in (0,1)$ is the momentum coefficient and $\eta$ is the learning rate. The vector $\hat{\mathbf{g}}^t$ is a quantized estimate of the stochastic gradient $\nabla F(\mathbf{x}^t; \xi^t)$, obtained through Lines 3--7 of Algorithm~\ref{alg_1}. Specifically, it takes the form $\left( \hat{\mathbf{g}}^t(\mathbf{x}^{(a_1)}), \hat{\mathbf{g}}^t(\mathbf{x}^{(a_2)}), \dots, \hat{\mathbf{g}}^t(\mathbf{x}^{(a_N)}) \right)$, where $a_1, a_2, \dots, a_N$ index the machines in the pipeline-parallel system.
Our analysis can be extended to the Adam optimizer with a few more involved derivations.

\subsection{Assumptions}

\begin{assumption}[Lower Boundedness]\label{asp:proper}
	The loss function $f:\mathbb{R}^{d}\rightarrow\mathbb{R}$ satisfies $\inf_{\mathbf{x}\in\mathbb{R}^d}f(\mathbf{x})>-\infty$.
\end{assumption}
\begin{assumption}[$L$-Smoothness]\label{asp:smoothness}
	The loss function $f$ is $L$-smooth, \ie, it holds for any $\mathbf{x},\mathbf{y}\in\mathbb{R}^{d}$ that
	\begin{align*}
		\|\nabla f(\mathbf{x})-\nabla f(\mathbf{y})\|_2\le L\|\mathbf{x}-\mathbf{y}\|_2.
	\end{align*}
\end{assumption}
\begin{assumption}[Stochastic Gradient]\label{asp:stochastic}
We assume that for some $\sigma>0$, the stochastic gradient oracle satisfies
\begin{equation}\label{eq:sg}
\begin{aligned}
\mathbb{E}\big[\nabla F(x^t;\xi^t)\big] &= \nabla f(x^t), \\
\mathbb{E}\big[\|\nabla F(x^t;\xi^t)-\nabla f(x^t)\|^2\big] &\le \sigma^2 .
\end{aligned}
\end{equation}
\end{assumption}

Assumptions~\ref{asp:proper}--\ref{asp:stochastic} are standard assumptions commonly used in stochastic optimization. The following assumption states that gradient quantization through \sys proposed in Algorithm~\ref{alg_1} does not introduce significant distortion to the true stochastic gradient.

\begin{assumption}[Quantization Error]\label{asp:contractive}
	Let $\mathbf{g}^t$ denote the original stochastic gradient $\nabla F(\mathbf{x}^t,\xi^t)$, and $\hat{\mathbf{g}}^t$ denote the quantized stochastic gradient obtained through \sys.
    For some $\delta\in(0,1],$ it holds that
	\begin{align}
		\|\hat{\mathbf{g}}^t-\mathbf{g}^t\|^2&\le(1-\delta)\|\mathbf{g}^t\|^2,\label{eq:asp-cgk} \\
        \|\mathbb{E}_{\xi^t\sim\mathcal{D}}[\hat{\mathbf{g}}^t]-\nabla f(\mathbf{x}^t)\|^2&\le(1-\delta)\|\nabla f(\mathbf{x}^t)\|^2,\label{eq:asp-ecgk}
	\end{align}
\end{assumption}
The above assumption ensures that the quantized gradient $\hat{\mathbf{g}}$ remains close to the true gradient $\mathbf{g}$, with their closeness measured by the quantization coefficient $\delta$. A larger $\delta$ (\ie, $\delta \to 1$) indicates a smaller quantization error. When $\delta = 1$, we have $\hat{\mathbf{g}} = \mathbf{g}$, implying no quantization error.

\textbf{Empirical justification of Assumption~\ref{asp:contractive}.} We now empirically verify that \sys satisfies Assumption~\ref{asp:contractive}. To validate inequality~\eqref{eq:asp-cgk}, we conduct fine-tuning experiments on the Gemma2-2B model using the Math-7K dataset. At each training step, we compute the relative error $\|\hat{\mathbf{g}}^t - \mathbf{g}^t\|^2 / \|\mathbf{g}^t\|^2$, as shown in Figures~\ref{fig:cgk1} and \ref{fig:cgk2}. The results indicate that the relative errors remain below 0.4 across all steps, confirming the validity of \eqref{eq:asp-cgk} with $\delta = 0.6$. To validate inequality~\eqref{eq:asp-ecgk}, we conduct experiments on the same model and dataset. At each step, we compute both the expected compressed gradient $\mathbb{E}_{\xi^t \sim \mathcal{D}}[\hat{\mathbf{g}}^t]$ and the full-batch gradient $\nabla f(\mathbf{x}^t)$, and then evaluate the relative error $\|\mathbb{E}_{\xi^t \sim \mathcal{D}}[\hat{\mathbf{g}}^t] - \nabla f(\mathbf{x}^t)\|^2 / \|\nabla f(\mathbf{x}^t)\|^2$, as shown in Figures~\ref{fig:ecgk1} and \ref{fig:ecgk2}. All relative errors are below 0.1, confirming the validity of \eqref{eq:asp-ecgk} with $\delta = 0.9$. In both experiments, we use tile sizes of 64 and 32, with 80\% \texttt{INT4} and 20\% \texttt{INT3} quantization. These experiments demonstrate the effectiveness of \sys, which quantizes variables to smaller sizes without incurring significant errors.

\subsection{Convergence Guarantees}

Under the above assumptions, we are ready to provide convergence guarantees of our proposed \sys method.

\begin{theorem}\label{thm:quantize}
	Under Assumptions~\ref{asp:proper}--\ref{asp:contractive}, if $\delta\in(0,1)$, $\beta_1\in\left(0,\frac{\delta}{24-12\delta}\right)$ and $\eta\le\min\left\{\frac{1}{2L}, \frac{\beta_1}{L}\cdot\sqrt{\frac{\delta}{8}}\right\}$, \sys with momentum SGD converges as
    \begin{align*}
    \frac{1}{T+1}\sum_{t=0}^T \mathbb{E}\!\left[\left\|\nabla f(\mathbf{x}^t)\right\|_2^2\right]
    &\le \frac{8\bigl(f(\mathbf{x}^0)-\inf_{\mathbf{x}} f(\mathbf{x})\bigr)}{\delta\,\eta\,(T+1)}
       + \frac{8\left\|\mathbf{m}^0-\nabla f(\mathbf{x}^0)\right\|_2^2}{\delta\,\beta_1\,(T+1)} \\
    &\quad + \frac{24\,\beta_1\,\sigma^2}{\delta}.
    \end{align*}
\end{theorem}

\begin{corollary}\label{col:1}
	Under Assumptions~\ref{asp:proper}--\ref{asp:contractive}, if we choose $\beta_1=\left(\frac{24}{\delta}+\sigma\sqrt{\frac{\delta^{1/2}\left(T+1\right)}{L\Delta}}\right)^{-1}$, $\eta=\left(2L+ \frac{2^{3/2}L}{\delta^{1/2}\beta_1}\right)^{-1}$, \sys with momentum SGD converges as
    \begin{align}
		&\frac{\sum_{t=0}^{T}\mathbb{E}[\|\nabla f(\mathbf{x}^t)\|_2^2]}{T+1}
		=\mathcal{O}\left(\frac{L\Delta}{\delta^{5/2}(T+1)}+\sqrt{\frac{L\Delta\sigma^2}{\delta^{5/2}(T+1)}}\right),\nonumber
	\end{align}
	where $\Delta:=f(\mathbf{x}^0)-\inf_{\mathbf{x}}f(\mathbf{x})+(\delta/L)\cdot\|\mathbf{m}^0-\nabla f(\mathbf{x}^0)\|_2^2$ (Proofs are in Appendix~\ref{app:proof}).
\end{corollary}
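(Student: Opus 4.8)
The plan is a Lyapunov‑function analysis of momentum SGD in which $\hat{\mathbf g}^t$ is treated as a \emph{biased} stochastic oracle whose bias and second moment are controlled by Assumption~\ref{asp:contractive}. Write $\bar{\mathbf g}^t:=\mathbb{E}_{\xi^t}[\hat{\mathbf g}^t]$. Two elementary consequences are used throughout: first, from \eqref{eq:asp-cgk} and the triangle inequality $\|\hat{\mathbf g}^t\|\le(1+\sqrt{1-\delta})\|\mathbf g^t\|\le 2\|\mathbf g^t\|$, so that by Assumption~\ref{asp:stochastic} $\mathbb{E}_{\xi^t}\|\hat{\mathbf g}^t\|^2\le 4\|\nabla f(\mathbf x^t)\|^2+4\sigma^2$ and (since $\|\bar{\mathbf g}^t-\nabla f(\mathbf x^t)\|\le\sqrt{1-\delta}\|\nabla f(\mathbf x^t)\|$) $\|\bar{\mathbf g}^t\|^2\le 4\|\nabla f(\mathbf x^t)\|^2$; second, from \eqref{eq:asp-ecgk}, Cauchy--Schwarz, and $1-\sqrt{1-\delta}\ge\delta/2$, the alignment inequality $\langle\nabla f(\mathbf x^t),\bar{\mathbf g}^t\rangle\ge\tfrac{\delta}{2}\|\nabla f(\mathbf x^t)\|^2$, i.e.\ the expected step still correlates positively with the true gradient, losing only a factor $\delta/2$.

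Next I would introduce the standard auxiliary iterate $\mathbf z^t:=\mathbf x^t-\tfrac{\eta(1-\beta_1)}{\beta_1}\mathbf m^{t-1}$ (with $\mathbf z^0=\mathbf x^0$) and verify by direct substitution into \eqref{adaptrule} that it obeys the plain recursion $\mathbf z^{t+1}=\mathbf z^t-\eta\hat{\mathbf g}^t$. Applying $L$‑smoothness at $\mathbf z^t$, taking the conditional expectation, writing $\nabla f(\mathbf z^t)=\nabla f(\mathbf x^t)+(\nabla f(\mathbf z^t)-\nabla f(\mathbf x^t))$, using the alignment inequality for the main inner product, Young's inequality together with $\|\nabla f(\mathbf z^t)-\nabla f(\mathbf x^t)\|\le L\|\mathbf z^t-\mathbf x^t\|=\tfrac{L\eta(1-\beta_1)}{\beta_1}\|\mathbf m^{t-1}\|$ and the bounds $\|\bar{\mathbf g}^t\|^2\le4\|\nabla f(\mathbf x^t)\|^2$, $\mathbb{E}\|\hat{\mathbf g}^t\|^2\le4\|\nabla f(\mathbf x^t)\|^2+4\sigma^2$ for the remaining terms, yields a one‑step descent of the form
\[
\mathbb{E}[f(\mathbf z^{t+1})]\le\mathbb{E}[f(\mathbf z^t)]-c_1\delta\eta\,\mathbb{E}\|\nabla f(\mathbf x^t)\|^2+\tfrac{c_2L^2\eta^3}{\delta\beta_1^2}\mathbb{E}\|\mathbf m^{t-1}\|^2+c_3\delta\eta\sigma^2
\]
for absolute constants $c_1,c_2,c_3>0$.

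In parallel, from \eqref{adaptrule} the momentum deviation $\mathbf e^t:=\mathbf m^t-\nabla f(\mathbf x^t)$ satisfies $\mathbf e^t=(1-\beta_1)\mathbf e^{t-1}+(1-\beta_1)(\nabla f(\mathbf x^{t-1})-\nabla f(\mathbf x^t))+\beta_1(\hat{\mathbf g}^t-\nabla f(\mathbf x^t))$. Splitting $\hat{\mathbf g}^t-\nabla f(\mathbf x^t)=(\hat{\mathbf g}^t-\bar{\mathbf g}^t)+(\bar{\mathbf g}^t-\nabla f(\mathbf x^t))$ --- a conditionally zero‑mean part carrying a $\beta_1^2$ prefactor with conditional second moment $\le4\|\nabla f(\mathbf x^t)\|^2+4\sigma^2$, and a bias with squared norm $\le(1-\delta)\|\nabla f(\mathbf x^t)\|^2$ --- and applying convexity of $\|\cdot\|^2$, Young's inequality and $L$‑smoothness gives
\[
\mathbb{E}\|\mathbf e^t\|^2\le(1-\beta_1)\mathbb{E}\|\mathbf e^{t-1}\|^2+\tfrac{c_4L^2\eta^2}{\beta_1}\mathbb{E}\|\mathbf m^{t-1}\|^2+c_5\beta_1(1-\delta)\mathbb{E}\|\nabla f(\mathbf x^t)\|^2+c_6\beta_1^2\sigma^2 .
\]
I would then form the potential $V^t:=\mathbb{E}[f(\mathbf z^t)]+\tfrac{\eta}{\beta_1}\mathbb{E}\|\mathbf e^t\|^2$, substitute $\|\mathbf m^{t-1}\|^2\le2\|\nabla f(\mathbf x^{t-1})\|^2+2\|\mathbf e^{t-1}\|^2$ into both inequalities, and use the hypotheses $\eta\le\min\{\tfrac1{2L},\tfrac{\beta_1}{L}\sqrt{\delta/8}\}$ and $\beta_1<\tfrac{\delta}{24-12\delta}$ to check that in $V^{t+1}-V^t$ the coefficients of $\mathbb{E}\|\mathbf e^{t}\|^2$ and $\mathbb{E}\|\mathbf m^{t}\|^2$ are nonpositive while that of $\mathbb{E}\|\nabla f(\mathbf x^t)\|^2$ is at most $-\tfrac{\delta\eta}{8}$, so that $V^{t+1}\le V^t-\tfrac{\delta\eta}{8}\mathbb{E}\|\nabla f(\mathbf x^t)\|^2+3\eta\beta_1\sigma^2$. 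Summing over $t=0,\dots,T$, using $V^{T+1}\ge\inf_{\mathbf x}f(\mathbf x)$ and $V^0=f(\mathbf x^0)+\tfrac{\eta}{\beta_1}\|\mathbf m^0-\nabla f(\mathbf x^0)\|^2$, and dividing by $\tfrac{\delta\eta(T+1)}{8}$ yields Theorem~\ref{thm:quantize}; Corollary~\ref{col:1} is then immediate by plugging in the prescribed $\beta_1$ and $\eta$ and simplifying.

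The main obstacle --- and the step I expect to be most delicate --- is verifying that all of these cross terms can be absorbed \emph{simultaneously}. The $(1-\delta)$‑bias term of the momentum recursion enters $V^{t+1}-V^t$ with coefficient $\asymp\tfrac{\eta}{\beta_1}\cdot\beta_1(1-\delta)=\eta(1-\delta)$, and it must not overwhelm the $\tfrac{\delta}{2}$‑alignment gain of the descent step; balancing these, together with the movement terms $\propto\|\mathbf m^{t-1}\|^2$ produced both by the descent ($\mathbf z^t$ versus $\mathbf x^t$) and by the deviation recursion, is exactly what forces $\beta_1$ to be small relative to $\delta$ --- the condition $\beta_1<\tfrac{\delta}{24-12\delta}$ --- and the step size to scale like $\eta=\mathcal{O}(\beta_1\sqrt{\delta}/L)$. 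A second subtlety is keeping the noise floor at order $\beta_1\sigma^2/\delta$: the only surviving $\sigma^2$ contributions are the $\beta_1^2\sigma^2$ term of the $\mathbf e$‑recursion (scaled by $\eta/\beta_1$, hence $\mathcal{O}(\eta\beta_1\sigma^2)$ per step) and the $\mathcal{O}(L\eta^2\sigma^2)$ term of the descent step (which is $\mathcal{O}(\eta\beta_1\sqrt{\delta}\,\sigma^2)$ under the step‑size restriction), so one must route the stochastic noise through $\|\mathbf e^t\|^2$ rather than through a crude bound on $\|\mathbf m^t\|^2$ by the raw second moment of $\hat{\mathbf g}^t$. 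Once a single admissible triple $(\eta,\beta_1,\text{potential weight})$ is shown to make the $\|\mathbf e\|^2$, $\|\mathbf m\|^2$ and $\|\nabla f\|^2$ coefficients all have the correct sign at once, the remainder is bookkeeping.
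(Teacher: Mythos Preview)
Your route is sound and would deliver Corollary~\ref{col:1}, but it is genuinely different from what the paper does. You run descent on the ghost sequence $\mathbf z^t=\mathbf x^t-\tfrac{\eta(1-\beta_1)}{\beta_1}\mathbf m^{t-1}$, exploit the alignment inequality $\langle\nabla f(\mathbf x^t),\bar{\mathbf g}^t\rangle\ge\tfrac{\delta}{2}\|\nabla f(\mathbf x^t)\|^2$ extracted from \eqref{eq:asp-ecgk}, and then build a Lyapunov function $V^t=\mathbb E[f(\mathbf z^t)]+\tfrac{\eta}{\beta_1}\mathbb E\|\mathbf e^t\|^2$. The paper instead works directly on $f(\mathbf x^t)$: since $\mathbf x^{t+1}-\mathbf x^t=-\eta\mathbf m^t$, the polarization identity $2\langle a,b\rangle=\|a\|^2+\|b\|^2-\|a-b\|^2$ converts $\langle\nabla f(\mathbf x^t),\mathbf m^t\rangle$ into $-\tfrac{\eta}{2}\|\nabla f(\mathbf x^t)\|^2-\tfrac{1}{2\eta}\|\mathbf x^{t+1}-\mathbf x^t\|^2+\tfrac{\eta}{2}\|\mathbf m^t-\nabla f(\mathbf x^t)\|^2$ without any alignment argument or virtual iterate. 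The error $\|\mathbf m^t-\nabla f(\mathbf x^t)\|^2$ is then controlled by a contraction lemma essentially identical to your $\mathbf e^t$ recursion, and the two displays are simply summed over $t$; no explicit potential weight has to be tuned because the $\|\mathbf x^{t+1}-\mathbf x^t\|^2$ term already appears with a negative coefficient and absorbs the movement term from the momentum recursion. What you gain from the ghost‑sequence approach is modularity (the $\mathbf z$‑step is plain biased SGD, so the machinery ports to other momentum variants); what the paper gains is that it never needs to bound $\|\mathbf m^{t-1}\|^2$ via $2\|\nabla f(\mathbf x^{t-1})\|^2+2\|\mathbf e^{t-1}\|^2$ and hence sidesteps the cross‑term juggling you flag as the delicate step. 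Either way the constants line up with Theorem~\ref{thm:quantize}, after which the corollary is just substitution.
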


\textbf{Remark.} Corollary~\ref{col:1} yields three key implications. First, it guarantees that the proposed \sys algorithm converges to a stationary solution of problem~\eqref{prob}. Second, it shows that \sys achieves a convergence rate of $\mathcal{O}(1/\sqrt{T})$, matching that of vanilla momentum SGD without gradient quantization. This demonstrates that \sys effectively preserves the valuable gradient information during quantization. Third, the theorem indicates that the convergence rate is affected by the quantization error, quantified by the coefficient $\delta$. This is consistent with our expectations. Since \sys maintains a relatively large $\delta$ (i.e., close to 1), the quantization error remains moderate and does not significantly slow convergence.

\begin{figure}[t]
    \centering
    \begin{subfigure}[b]{0.245\textwidth}
        \centering
        \includegraphics[width=1.0\textwidth]{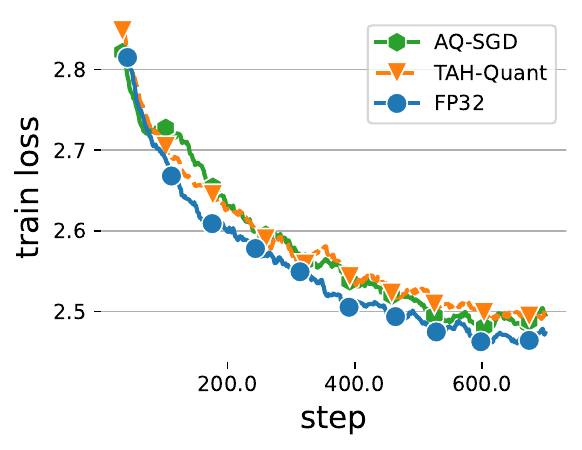}
        \caption{Task (\underline{i}).}
        \label{fig:exp_wiki_conv}
    \end{subfigure}
    \begin{subfigure}[b]{0.245\textwidth}
        \centering
        \includegraphics[width=1.0\textwidth]{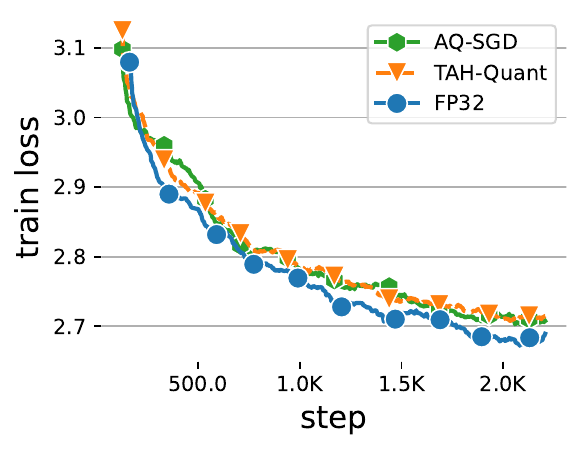}
        \caption{Task (\underline{ii}).}
        \label{fig:exp_arxiv_conv}
    \end{subfigure}
    \begin{subfigure}[b]{0.245\textwidth}
        \centering
        \includegraphics[width=1.0\textwidth]{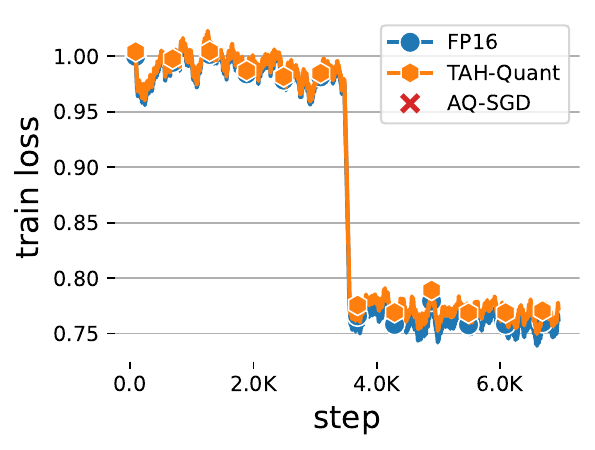}
        \caption{Task (\underline{iii}).}
        \label{fig:exp_magicoder_conv}
    \end{subfigure}
    \begin{subfigure}[b]{0.245\textwidth}
        \centering
        \includegraphics[width=1.0\textwidth]{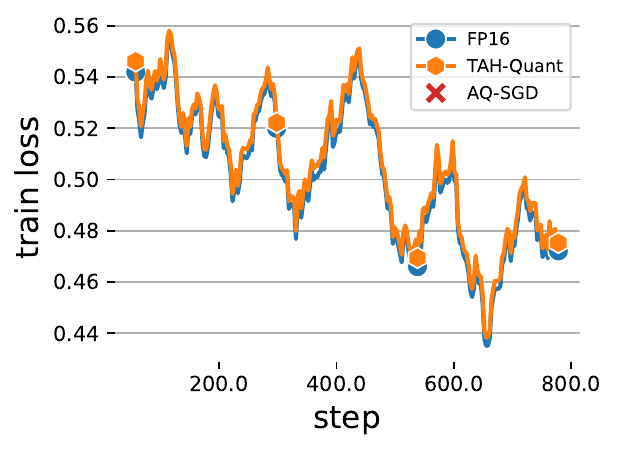}
        \caption{Task (\underline{iv}).}
        \label{fig:exp_platyups_conv}
    \end{subfigure}\\[6pt]
    \begin{subfigure}[b]{0.245\textwidth}
        \centering
        \includegraphics[width=1.0\textwidth]{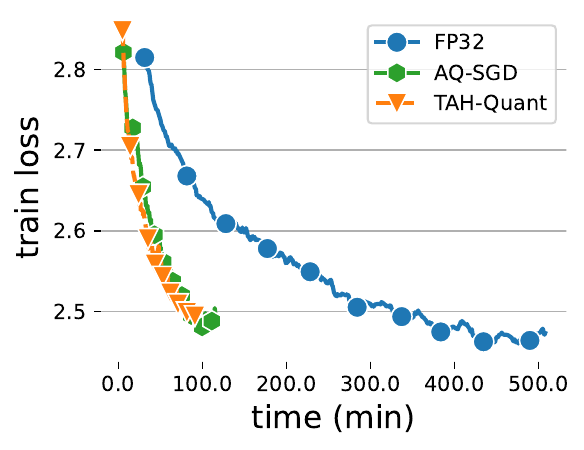}
        \caption{Task (\underline{i}), 100Mbps.}
        \label{fig:exp_platyups_thp_100M}
    \end{subfigure}
    \begin{subfigure}[b]{0.245\textwidth}
        \centering
        \includegraphics[width=1.0\textwidth]{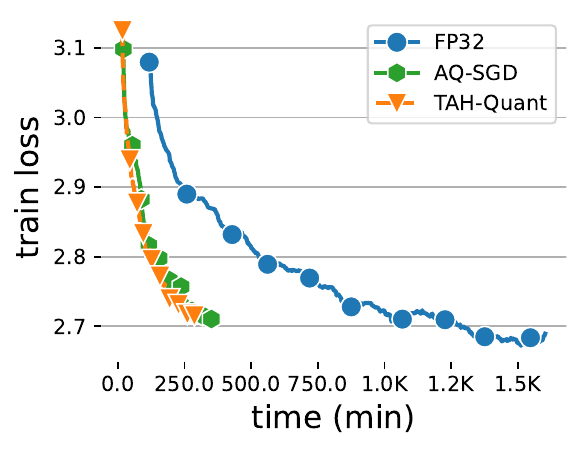}
        \caption{Task (\underline{ii}), 100Mbps.}
        \label{fig:exp_c4_thp_100M}
    \end{subfigure}
    \begin{subfigure}[b]{0.245\textwidth}
        \centering
        \includegraphics[width=1.0\textwidth]{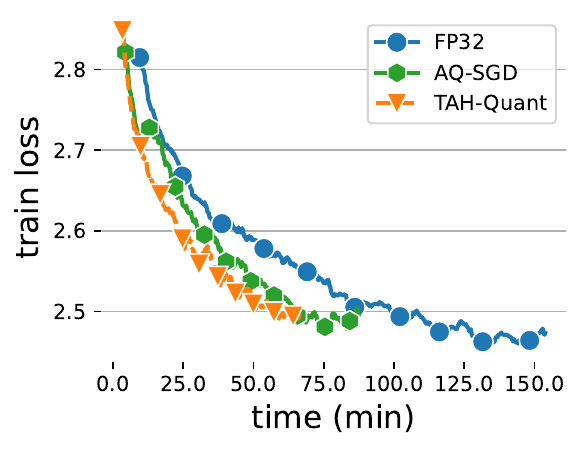}
        \caption{Task (\underline{i}), 500Mbps.}
        \label{fig:exp_platyups_thp_500M}
    \end{subfigure}
    \begin{subfigure}[b]{0.245\textwidth}
        \centering
        \includegraphics[width=1.0\textwidth]{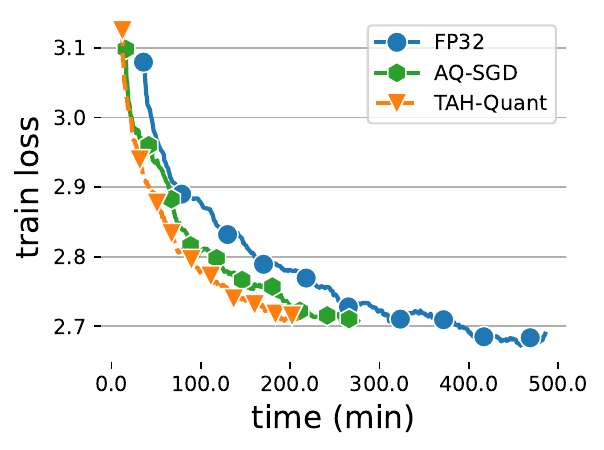}
        \caption{Task (\underline{ii}), 500Mbps.}
        \label{fig:exp_c4_thp_500M}
    \end{subfigure}
    \caption{Top row: training-convergence comparison on the fine-tuning and instruction-tuning tasks (loss vs.\ steps). Bottom row: end-to-end training performance under different network bandwidths (loss vs.\ wall-clock time) on tasks (\underline{i}) and (\underline{ii}). See Section~\ref{sec:exp_setup} for the detailed experimental setup of each task. Pretraining task (\underline{v}) is summarised in Table~\ref{tab:exp_pretrain_llama1b}.}
    \label{fig:exp_convs}
\end{figure}

\section{Evaluation}

We demonstrate that \sys significantly accelerates LLM training over slow network connections. Specifically, we show that: (\underline{i}) on representative benchmarks, \sys enables aggressive quantization of activations and gradients without compromising convergence performance or incurring notable additional system overhead (Section~\ref{sec:exp_e2e}); and (\underline{ii}) the effectiveness of our system design is validated through a series of ablation studies (Section~\ref{sec:ablation}).

\subsection{Experimental Setup}
\label{sec:exp_setup}

\textbf{Datasets and benchmarks.} We evaluate the proposed method on five distinct training scenarios spanning language modeling and instruction-following tasks. Specifically, we fine-tune \texttt{GPT-2XL} (1.5B parameters) on (\underline{i}) \texttt{WikiText-2}, a standard Wikipedia-based language modeling benchmark, and (\underline{ii}) \texttt{ArXiv21}, a corpus of research paper abstracts from arXiv. To assess performance on instruction data, we fine-tune \texttt{Qwen2.5-3B} (3B parameters) on (\underline{iii}) \texttt{Magicoder-Evol-Instruct-110K}, a dataset of 110k code-related instruction-response pairs, and (\underline{iv}) \texttt{Open-Platyups}, a composite open-source instruction tuning dataset covering multiple domains. Finally, we launch from-scratch pretraining of \texttt{LLaMA-3.2-1B} (1.2B parameters) on (\underline{v}) \texttt{SlimPajama-6B}, a deduplicated 6B-token subset of the SlimPajama corpus tokenized with the LLaMA-3.2 tokenizer. These setups cover both general and specialized tasks, as well as supervised instruction tuning and LLM pretraining. We additionally include a supplementary \texttt{LLaMA-8B} pretraining run on \texttt{Proof-Pile} in Appendix~\ref{app:llama8b}.

\textbf{Distributed cluster.} The fine-tuning and instruction-tuning experiments (Tasks (\underline{i})--(\underline{iv})) are conducted on UCloud~\cite{ucloud} using a distributed cluster of 8 instances, each equipped with an Nvidia RTX 3090 GPU. Each model is partitioned into 8 or 4 pipeline stages (one stage per GPU) to execute pipeline parallelism. The cluster's default interconnect bandwidth is 10 Gbps. To emulate slow-network conditions, we throttle inter-instance communication using Linux traffic control (tc), artificially limiting the bandwidth to sub-1 Gbps during training. This setting is common in decentralized environments and is frequently used in prior LLM training evaluations~\cite{kim2025halos, lim2024accelerating, li2024accelerating, erben2023can, wang2023cocktailsgd, borzunov2023distributed}; similar constraints also arise in decentralized inference~\cite{mei2025helix}.

\textbf{Pretraining setup (Task (\underline{v})).} Because the pretraining workload is substantially heavier than fine-tuning, we run Task (\underline{v}) on a single 8-GPU server shared by both the \texttt{FP16} baseline and the \texttt{TAH-Quant} run, each occupying 4 GPUs and configured with pipeline depth PP=4. Sequence length is 4096, the global batch size is 256 (i.e.\ $\sim$1.05\,M tokens / step), and the learning rate follows a cosine decay (peak $1.5{\times}10^{-4}$, min $1{\times}10^{-5}$).

\textbf{Baselines.} We compare our approach with two baseline communication strategies\footnote{Each baseline is integrated into the same pipeline parallel training setup for fair comparison.}.

\begin{itemize}[topsep=2pt, itemsep=1pt, parsep=0pt, partopsep=0pt, leftmargin=*]
\item \texttt{FP32/FP16}, which uses full-precision 32-bit (in Tasks (\underline{i}) and (\underline{ii})) or 16-bit floating point (in Tasks (\underline{iii}), (\underline{iv}), and (\underline{v})) communication with no compression.
\item \texttt{AQ-SGD}, the error-compensated low-bit activation quantization method with theoretical convergence guarantees~\cite{wang2022fine}.
\end{itemize}

\textbf{Default bit allocation.} Unless otherwise specified, \sys uses 80\% INT4 + 20\% INT3 mixed-precision activation quantization, with quantization tile size $G=64$ and allocation-tile size $A=64$. This 4/3-bit setting is the lowest-bit configuration that remains consistently stable in training; more aggressive choices (e.g., INT2) often lead to non-convergence. The 80/20 split balances communication reduction and convergence stability (Appendix~\ref{app:bit_sweep}).

\subsection{End-to-End Performance Results}
\label{sec:exp_e2e}
To systematically evaluate the proposed \sys quantization method, we conduct the experiment and report the corresponding results in terms of training convergence, downstream task performance, and end-to-end training time.

\textbf{Convergence.} Figure~\ref{fig:exp_convs} (top row) and Table~\ref{tab:exp_pretrain_llama1b} together summarize the convergence comparisons across tasks, demonstrating the efficacy and robustness of \sys. Specifically, on tasks (\underline{i}) and (\underline{ii}), where \texttt{AQ-SGD} is executable due to manageable dataset sizes and a multi-epoch training paradigm, \sys achieves comparable or slightly superior convergence compared to \texttt{AQ-SGD}. On larger-scale tasks (i.e., tasks (\underline{iii}), (\underline{iv}), and (\underline{v})), where \texttt{AQ-SGD} becomes infeasible due to prohibitive storage requirements (Task (\underline{iii})) or the single-epoch training constraint (Tasks (\underline{iv}) and (\underline{v})), \sys still closely matches the standard \texttt{FP16} baseline. For from-scratch pretraining of \texttt{LLaMA-3.2-1B} on \texttt{SlimPajama-6B} (Table~\ref{tab:exp_pretrain_llama1b}), \sys matches \texttt{FP16} not only in training loss but also in held-out validation perplexity throughout the entire $\sim$6\,B-token training window, indicating that the proposed activation quantization does not degrade out-of-sample performance.\footnote{We further discuss backward-gradient quantization and numerical stability in Appendix~\ref{app:backward}.}

\begin{table}[h]
    \centering
    \small
    \caption{Pretraining \texttt{LLaMA-3.2-1B} from scratch on \texttt{SlimPajama-6B} (PP=4, sequence length 4096, global batch size 256). Train loss is the rolling mean over a $300$-step window at the indicated token milestone; validation perplexity is measured at the closest evaluation checkpoint. Across the entire $\sim$6\,B-token horizon, \sys is statistically indistinguishable from the uncompressed \texttt{FP16} baseline.}
    \label{tab:exp_pretrain_llama1b}
    \setlength{\tabcolsep}{6pt}
    \begin{tabular}{lcccccccccccc}
    \toprule
    & \multicolumn{6}{c}{\textit{Train loss} ($\downarrow$)} & \multicolumn{6}{c}{\textit{Validation perplexity} ($\downarrow$)} \\
    \cmidrule(lr){2-7} \cmidrule(lr){8-13}
    \textbf{Tokens} & 1\,B & 2\,B & 3\,B & 4\,B & 5\,B & 6\,B & 1\,B & 2\,B & 3\,B & 4\,B & 5\,B & 6\,B \\
    \midrule
    \texttt{FP16}      & 3.778 & 3.326 & 3.149 & 3.032 & 2.954 & 2.893 & 39.18 & 28.54 & 25.36 & 23.03 & 21.42 & 20.67 \\
    \texttt{TAH-Quant} & 3.781 & 3.325 & 3.147 & 3.031 & 2.953 & 2.891 & 38.47 & 28.51 & 25.30 & 22.99 & 21.41 & 20.64 \\
    \bottomrule
    \end{tabular}
\end{table}

\begin{table}[h!]
    \centering
    \small
    \caption{\texttt{Qwen2.5-3B} SFT evaluation. Fine-tuning data are Open-Platyups for ARC/TQ/WG and Magicoder for HE; see Appendix~\ref{app:exp_details}.}
    \label{tab:quality}
    \setlength{\tabcolsep}{10pt}
    \begin{tabular}{lccccc}
    \toprule
        \multirow{2}{*}{Model}
        & \multirow{2}{*}{AVG}
        & \multicolumn{3}{c}{Open-Platyups}
        & \multicolumn{1}{c}{Magicoder} \\
        \cmidrule(lr){3-5} \cmidrule(lr){6-6}
        &       & ARC   & TQ    & WG    & HE    \\
    \midrule
        \texttt{Qwen}     & 51.13 & 47.35 & 48.85 & 68.67 & 39.63 \\
        \texttt{SFT-FP16} & 59.08 & 50.00 & 50.49 & 69.38 & 66.46 \\
        \texttt{SFT-TAH}  & 59.32 & 49.91 & 49.61 & 70.00 & 67.68 \\
    \bottomrule
    \end{tabular}
\end{table}

Furthermore, Table~\ref{tab:quality} reports downstream evaluations for the SFTed \texttt{Qwen2.5-3B} model in Tasks (\underline{iii}) and (\underline{iv}), showing that fine-tuning with \sys preserves model quality and achieves nearly identical performance to \texttt{FP16} across multiple benchmarks. Overall, the results highlight that \sys enables aggressive activation quantization without extra memory overhead, making it broadly applicable and scalable in realistic training scenarios.

\begin{wraptable}{r}{0.42\linewidth}
    \centering
    \captionof{table}{\texttt{GPT2-xl} throughput (tokens / s).}
    \label{tab:throughput}
    \small
    \setlength{\tabcolsep}{3pt}
    \begin{tabular}{lccc}
    \toprule
    \begin{tabular}{@{}c@{}}Network\\Bandwidth\end{tabular}
      & \texttt{FP32}
      & \begin{tabular}{@{}c@{}}\texttt{AQ-SGD}\\{\scriptsize fw4 bw8}\end{tabular}
      & \begin{tabular}{@{}c@{}}\texttt{TAH-Q}\\{\scriptsize fw\textasciitilde4 bw6}\end{tabular} \\
    \midrule
    1Gbps   & 2600 & 4749 & 5650 \\
    500Mbps & 2482 & 4311 & 5749 \\
    300Mbps & 1761 & 4369 & 5120 \\
    100Mbps &  751 & 3310 & 4045 \\
    \bottomrule
    \end{tabular}
\end{wraptable}
\textbf{End-to-end training time.} \sys is designed to preserve per-step convergence while reducing communication time, so its benefit appears primarily in wall-clock efficiency under bandwidth-limited networks rather than in fewer optimization steps. As illustrated in the bottom row of Figure~\ref{fig:exp_convs}, \sys reaches the same loss faster than both uncompressed communication and \texttt{AQ-SGD}, with up to $1.33\times$ wall-clock speedup over \texttt{AQ-SGD}. We attribute the speedup over \texttt{AQ-SGD} to eliminating the activation-cache offloading used by its error-compensation mechanism. Table~\ref{tab:throughput} reports the corresponding \texttt{GPT-2XL} training throughput under 1Gbps, 500Mbps, 300Mbps, and 100Mbps bandwidth; \sys achieves up to $4.3\times$ throughput speedup over uncompressed \texttt{FP32}.

\textbf{Micro-benchmarks.}
To further characterize the impact of system parameters, we sweep network bandwidth and micro batch size while fixing the global batch size to 32, and measure end-to-end throughput (tokens/s). The speedup of \sys is larger under lower bandwidth, and is maximized at moderate micro-batch sizes (mbs=2--4). With very large mbs, the number of micro-batches decreases and the idle time of pipeline bubbles increases, slightly reducing the speedup.
Full results are reported in Appendix~\ref{app:microbench}.

\subsection{Ablation Study}
\label{sec:ablation}

To evaluate the specific contributions of each module in \sys to reduce quantization error under pipeline parallel training, we perform a series of ablation studies and enumerate the experimental results below:

\begin{wraptable}{r}{0.22\linewidth}
    \centering
    \small
    \caption{Ablation: tile-wise quantization group size (TS).}
    \label{tab:abl_twg}
    \setlength{\tabcolsep}{4pt}
    \begin{tabular}{ccc}
    \toprule
    TS  & MMLU  & ARC   \\
    \midrule
    8   & 64.60 & 50.34 \\
    32  & 64.88 & 49.91 \\
    128 & 64.34 & 49.66 \\
    \bottomrule
    \end{tabular}
\end{wraptable}

\underline{First}, to study how the \textbf{tile-wise quantization group size} influences statistical efficiency, we vary the group size to $8$, $32$, and $128$ and compare SFTed \texttt{Qwen2.5-3B} models over the set of benchmarks. Table~\ref{tab:abl_twg} shows that smaller groups (TS=8 and TS=32) achieve nearly identical scores on both MMLU and ARC, whereas TS=128 measurably degrades MMLU (from 64.88 to 64.34) and ARC (from 50.34 to 49.66). We attribute this to the within-group statistical heterogeneity that grows with TS: as the group spans more channels, the single shared scale factor must accommodate increasingly disparate activation magnitudes, inflating the per-element quantization error. We therefore adopt TS=64 as our default, which retains the accuracy of small groups while reducing the per-tile metadata overhead relative to TS=8 by an $8{\times}$ factor.

\underline{Second}, to examine the effectiveness of the \textbf{entropy-guided adaptive bit allocation}, we compare \sys with adaptive bit allocation enabled against a variant without adaptive allocation. Results in Figure~\ref{fig:exp_tla_aba} show that adaptive allocation accelerates training convergence --- the training loss consistently decreases faster with adaptive bit allocation, reflecting reduced quantization error during compression. These observations validate our design of incorporating entropy-based tile-wise bit-width allocation.

\begin{wrapfigure}{r}{0.55\linewidth}
    \centering
    \vspace{-0.5em}
    \begin{subfigure}[b]{0.48\linewidth}
        \centering
        \includegraphics[width=\linewidth]{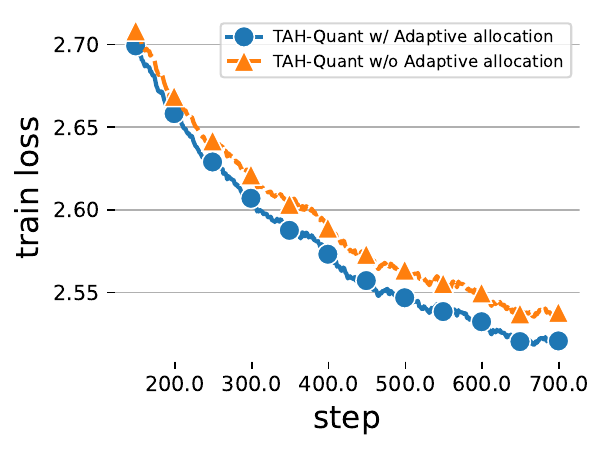}
        \caption{Adaptive bit allocation.}
        \label{fig:exp_tla_aba}
    \end{subfigure}
    \hfill
    \begin{subfigure}[b]{0.48\linewidth}
        \centering
        \includegraphics[width=\linewidth]{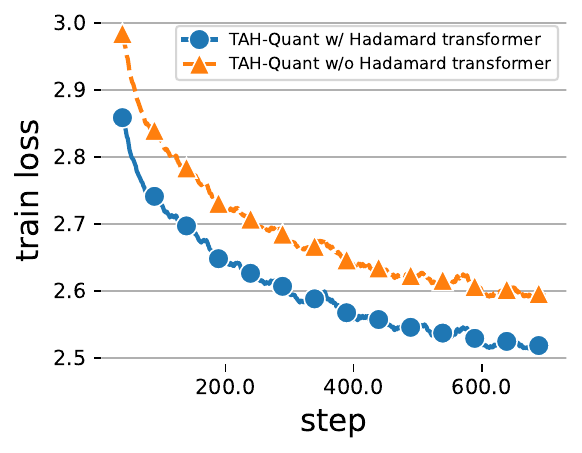}
        \caption{Hadamard transform.}
        \label{fig:exp_ht_aba}
    \end{subfigure}
    \vspace{-0.5em}
    \caption{Ablation studies on \sys components.}
    \label{fig:exp_aba}
    \vspace{-0.5em}
\end{wrapfigure}

\underline{Third}, we evaluate the necessity of the \textbf{Hadamard-based outlier suppression} component in \sys. Comparing training performance between setups with and without the Hadamard transform reveals that including this transform notably improves training stability and convergence speed --- in Figure~\ref{fig:exp_ht_aba}, the training loss remains consistently lower across the training phase when the Hadamard transform is applied. This result indicates that spreading activation energy across coordinates reduces the impact of isolated extreme values, making low-bit quantization less sensitive to outliers. It also highlights the complementary role of pivot element swapping: the swap first relocates the dominant entry to a favorable position, while the Hadamard transform further redistributes its effect, together yielding more stable quantized training.

\section{Conclusion}

We present \sys, a novel activation quantization method that alleviates communication bottlenecks in decentralized pipeline-parallel training of LLMs. \sys integrates fine-grained tile-wise quantization for localized error control, entropy-guided tile-wise bit allocation that refines token-level allocation, and a Hadamard-based transform with pivot swapping to mitigate outliers. We theoretically show that pipeline-parallel training with \sys preserves the same convergence rate ($\mathcal{O}(1/\sqrt{T})$) as standard SGD. Empirical results demonstrate that \sys compresses activations to 3--4 bits without degrading convergence, while improving throughput over uncompressed communication, reducing wall-clock time relative to AQ-SGD, avoiding activation-cache overhead, and maintaining robust generalization across fine-tuning, instruction tuning, and from-scratch pretraining.

\subsubsection*{Broader Impact Statement}
This paper presents work whose goal is to advance the field of machine learning. There might be some potential societal consequences of our work, none of which we feel must be specifically highlighted here.

\appendix

\section{The Use of LLMs in Writing}
We used a LLM, namely \textsc{OpenAI-GPT5}, to polish the writing of this manuscript. No other generative AI functionality is used in the writing of this submission.

\section{Experimental Details}\label{app:exp_details}

\textbf{Downstream evaluation protocol (Table~\ref{tab:quality}).} We evaluate the SFTed \texttt{Qwen2.5-3B} models on four standard benchmarks: ARC~\cite{DBLP:journals/corr/abs-1803-05457}, TruthfulQA (TQ)~\cite{lin-etal-2022-truthfulqa}, WinoGrande (WG)~\cite{sakaguchi2021winogrande}, and HumanEval (HE)~\cite{chen2021codex}. All evaluations are conducted in a zero-shot setting using the lm-evaluation-harness~\cite{eval-harness} framework. We report normalized accuracy for ARC-Challenge, accuracy for WinoGrande, mc2 for TruthfulQA, and pass@1 for HumanEval.
\textbf{Fine-tuning.} We fine-tune the \texttt{GPT2-xl} on \texttt{WikiText-2} and \texttt{ArXiv21} for 10 epochs. Specifically, we set the learning rate to 5.0e-6, the batch size to 32, and the micro-batch size to 1, max sequence length to 1024 for both datasets. The learning rate decays linearly after the warm-up stage.

\textbf{Instruction-tuning.} We perform instruction tuning on \texttt{Qwen2.5-3B} using \texttt{Open-Platyups} and \texttt{Magicoder-110K} for 1 and 2 epochs, respectively. The learning rate is set to 2.0e-5, with a batch size of 32 for both datasets. We use a cosine learning rate scheduler for \texttt{Open-Platyups}, and a cosine scheduler with a minimum learning rate of 2.0e-6 for \texttt{Magicoder-110K}.

\textbf{Pretraining (Task (\underline{v})).}
We pretrain \texttt{LLaMA-3.2-1B} from scratch on the \texttt{SlimPajama-6B} corpus (a deduplicated 6\,B-token subset of SlimPajama) tokenized with the LLaMA-3.2 tokenizer. The model has 16 transformer layers, hidden size 2048, 32 attention heads, 8 KV heads, and FFN hidden size 8192; we use sequence length 4096 and a global batch size of 256 (i.e.\ $\sim$1.05\,M tokens / step). The peak learning rate is $1.5{\times}10^{-4}$ with cosine decay to a minimum of $1{\times}10^{-5}$, weight decay $0.1$, $\beta_1{=}0.9$, $\beta_2{=}0.95$, gradient clipping at 1.0, and a small linear warmup. Pipeline depth is PP=4 (one transformer stage per GPU). The \texttt{FP16} baseline and the \texttt{TAH-Quant} arm run concurrently on the same 8-GPU server (4 GPUs each), sharing the same hardware and wall-clock budget. The main text reports training through $\sim$6\,B tokens, where both arms reach the same loss and validation perplexity within noise.

A supplementary \texttt{LLaMA-8B} pretraining run on \texttt{Proof-Pile} is reported in Appendix~\ref{app:llama8b}: $20k$ iterations at global batch size $131{,}072$ tokens (i.e.\ $\sim$2.5\,B tokens, $>$30\% of the corpus), peak learning rate $1.5{\times}10^{-4}$ with cosine decay to $1{\times}10^{-5}$, and weight decay 0.1.

\section{Missing Proofs}\label{app:proof}

In this section, we provide detailed proofs for Theorem~\ref{thm:quantize}. We first prove the following lemma.
\begin{lemma}[Descent lemma]\label{descent_lemma}
	Under Assumption~\ref{asp:smoothness} and the update rule~\ref{adaptrule}, it holds that
	\begin{align}
		f(\mathbf{x}^{t+1})\le&	f(\mathbf{x}^{t})-\frac{\eta}{2}\|\nabla f(\mathbf{x}^t)\|_2^2-\left(\frac{1}{2\eta}-\frac{L}{2}\right)\|\mathbf{x}^{t+1}-\mathbf{x}^t\|_2^2+\frac{\eta}{2}\|\mathbf{m}^t-\nabla f(\mathbf{x}^t)\|_2^2.
	\end{align}
\end{lemma}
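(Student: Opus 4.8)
The plan is to derive this descent lemma as a routine consequence of $L$-smoothness (Assumption \ref{asp:smoothness}) together with the update rule $\mathbf{x}^{t+1}=\mathbf{x}^t-\eta\mathbf{m}^t$ from \eqref{adaptrule}. First I would apply the standard quadratic upper bound implied by $L$-smoothness,
\begin{align*}
f(\mathbf{x}^{t+1})\le f(\mathbf{x}^t)+\langle\nabla f(\mathbf{x}^t),\mathbf{x}^{t+1}-\mathbf{x}^t\rangle+\frac{L}{2}\|\mathbf{x}^{t+1}-\mathbf{x}^t\|_2^2,
\end{align*}
which holds for any $\mathbf{x}^t,\mathbf{x}^{t+1}$ under Assumption \ref{asp:smoothness}. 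The only nontrivial term is the inner product $\langle\nabla f(\mathbf{x}^t),\mathbf{x}^{t+1}-\mathbf{x}^t\rangle$, and the goal is to rewrite it so that a $-\frac{\eta}{2}\|\nabla f(\mathbf{x}^t)\|_2^2$ term appears while the mismatch between the momentum buffer $\mathbf{m}^t$ and the true gradient $\nabla f(\mathbf{x}^t)$ is isolated.

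The key algebraic step is to use the update $\mathbf{x}^{t+1}-\mathbf{x}^t=-\eta\mathbf{m}^t$ and the polarization/"three-point" identity $\langle \mathbf{a},\mathbf{b}\rangle = \tfrac12\|\mathbf{a}\|_2^2+\tfrac12\|\mathbf{b}\|_2^2-\tfrac12\|\mathbf{a}-\mathbf{b}\|_2^2$. Concretely, I would write
\begin{align*}
\langle\nabla f(\mathbf{x}^t),\mathbf{x}^{t+1}-\mathbf{x}^t\rangle
&=\frac{1}{\eta}\langle \eta\nabla f(\mathbf{x}^t),\mathbf{x}^{t+1}-\mathbf{x}^t\rangle\\
&=-\frac{\eta}{2}\|\nabla f(\mathbf{x}^t)\|_2^2-\frac{1}{2\eta}\|\mathbf{x}^{t+1}-\mathbf{x}^t\|_2^2+\frac{\eta}{2}\left\|\nabla f(\mathbf{x}^t)+\tfrac{1}{\eta}(\mathbf{x}^{t+1}-\mathbf{x}^t)\right\|_2^2,
\end{align*}
and then observe that $\nabla f(\mathbf{x}^t)+\tfrac{1}{\eta}(\mathbf{x}^{t+1}-\mathbf{x}^t)=\nabla f(\mathbf{x}^t)-\mathbf{m}^t$, so the last term is exactly $\frac{\eta}{2}\|\mathbf{m}^t-\nabla f(\mathbf{x}^t)\|_2^2$.

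Substituting this identity back into the smoothness bound and collecting the two $\|\mathbf{x}^{t+1}-\mathbf{x}^t\|_2^2$ contributions (one from the inner product expansion with coefficient $-\frac{1}{2\eta}$ and one from smoothness with coefficient $\frac{L}{2}$) yields exactly
\begin{align*}
f(\mathbf{x}^{t+1})\le f(\mathbf{x}^t)-\frac{\eta}{2}\|\nabla f(\mathbf{x}^t)\|_2^2-\left(\frac{1}{2\eta}-\frac{L}{2}\right)\|\mathbf{x}^{t+1}-\mathbf{x}^t\|_2^2+\frac{\eta}{2}\|\mathbf{m}^t-\nabla f(\mathbf{x}^t)\|_2^2,
\end{align*}
which is the claim. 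There is no real obstacle here: the lemma is a self-contained deterministic inequality, and the only thing to be careful about is bookkeeping the sign and coefficient of the $\|\mathbf{x}^{t+1}-\mathbf{x}^t\|_2^2$ term and correctly identifying $\mathbf{m}^t$ via the update rule. The genuine difficulty of the paper lies downstream — controlling $\|\mathbf{m}^t-\nabla f(\mathbf{x}^t)\|_2^2$ across iterations using the momentum recursion together with the biased-quantization bound \eqref{eq:asp-ecgk} and the relative-error bound \eqref{eq:asp-cgk} from Assumption \ref{asp:contractive} — but that is the business of the subsequent lemmas, not of this descent lemma.
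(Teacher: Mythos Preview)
Your proof is correct and essentially identical to the paper's: both apply the $L$-smoothness quadratic upper bound and then expand the inner product via the polarization identity $2\langle a,b\rangle=\|a\|_2^2+\|b\|_2^2-\|a-b\|_2^2$, identifying $\nabla f(\mathbf{x}^t)-\mathbf{m}^t$ through the update rule $\mathbf{x}^{t+1}-\mathbf{x}^t=-\eta\mathbf{m}^t$. Your remark that the real work lies in controlling $\|\mathbf{m}^t-\nabla f(\mathbf{x}^t)\|_2^2$ downstream is also consistent with the paper's structure.
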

\begin{proof}
	by Assumption~\ref{asp:smoothness} we have
	\begin{align}
		f(\mathbf{x}^{t+1})\le&f(\mathbf{x}^{t})+\eta\langle\nabla f(\mathbf{x}^t),\frac{1}{\eta}(\mathbf{x}^{t+1}-\mathbf{x}^t)\rangle+\frac{L}{2}\|\mathbf{x}^{t+1}-\mathbf{x}^t\|_2^2\nonumber\\
		=& f(\mathbf{x}^{t})-\frac{\eta}{2}\|\nabla f(\mathbf{x}^t)\|_2^2-\frac{1}{2\eta}\|\mathbf{x}^{t+1}-\mathbf{x}^t\|_2^2+\frac{\eta}{2}\|\nabla f(\mathbf{x}^t)-\mathbf{m}^t\|_2^2+\frac{L}{2}\|\mathbf{x}^{t+1}-\mathbf{x}^t\|_2^2.
	\end{align}
	where the second equality uses $2\langle a,b \rangle =\|a\|_2^2+\|b\|_2^2
	-\|a-b\|_2^2$
\end{proof}
\begin{lemma}[momentum contraction]\label{lm:moment_cont}
	Under Assumptions~\ref{asp:proper}--\ref{asp:contractive}, if $\delta\in(0,1)$, it holds that
	\begin{align}
		\mathbb{E}[\|\mathbf{m}^t-\nabla f(\mathbf{x}^t)\|_2^2]\le&\left(1-\beta_1\left(1-\frac{\delta}{2}\right)\right)\mathbb{E}[\|\mathbf{m}^{t-1}-\nabla f(\mathbf{x}^{t-1})\|_2^2]+\frac{2L^2}{\delta\beta_1}\mathbb{E}[\|\mathbf{x}^t-\mathbf{x}^{t-1}\|_2^2]\nonumber\\
		&+(\beta_1+6\beta_1^2)(1-\delta)\mathbb{E}[\|\nabla f(\mathbf{x}^t)\|_2^2]+3(2-\delta)\beta_1^2\sigma^2.
	\end{align}
\end{lemma}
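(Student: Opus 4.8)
\textbf{Proof proposal for Lemma~\ref{lm:moment_cont}.}

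The plan is to expand the momentum error $\mathbf{m}^t-\nabla f(\mathbf{x}^t)$ around the previous iterate and carefully track the three sources of discrepancy: the momentum decay, the gradient drift between consecutive iterates, and the quantization bias. First I would substitute the momentum update $\mathbf{m}^t=(1-\beta_1)\mathbf{m}^{t-1}+\beta_1\hat{\mathbf{g}}^t$ and write
\begin{align*}
\mathbf{m}^t-\nabla f(\mathbf{x}^t)
=(1-\beta_1)\left(\mathbf{m}^{t-1}-\nabla f(\mathbf{x}^{t-1})\right)
+(1-\beta_1)\left(\nabla f(\mathbf{x}^{t-1})-\nabla f(\mathbf{x}^t)\right)
+\beta_1\left(\hat{\mathbf{g}}^t-\nabla f(\mathbf{x}^t)\right).
\end{align*}
The first term is the contractive part; the second is controlled by $L$-smoothness (Assumption~\ref{asp:smoothness}) as $L\|\mathbf{x}^t-\mathbf{x}^{t-1}\|$; the third carries the quantization error. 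For the third term I would further split $\hat{\mathbf{g}}^t-\nabla f(\mathbf{x}^t) = (\hat{\mathbf{g}}^t-\mathbb{E}_{\xi^t}[\hat{\mathbf{g}}^t]) + (\mathbb{E}_{\xi^t}[\hat{\mathbf{g}}^t]-\nabla f(\mathbf{x}^t))$ so that, after conditioning on $\mathbf{x}^t$ and taking expectation over $\xi^t$, the cross term between the contractive part and the zero-mean noise part vanishes, while the bias part is bounded by $(1-\delta)\|\nabla f(\mathbf{x}^t)\|^2$ via inequality~\eqref{eq:asp-ecgk}, and the variance of $\hat{\mathbf{g}}^t$ is bounded using \eqref{eq:asp-cgk} together with Assumption~\ref{asp:stochastic} (writing $\|\hat{\mathbf{g}}^t\|^2\lesssim\|\mathbf{g}^t\|^2$ and $\mathbb{E}\|\mathbf{g}^t\|^2\le 2\|\nabla f(\mathbf{x}^t)\|^2+2\sigma^2$).

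The main technical manipulation is the application of Young's inequality to the cross terms. The cross term between the contraction term $(1-\beta_1)(\mathbf{m}^{t-1}-\nabla f(\mathbf{x}^{t-1}))$ and the smoothness term $(1-\beta_1)(\nabla f(\mathbf{x}^{t-1})-\nabla f(\mathbf{x}^t))$ should be split with a parameter tuned to produce the factor $1-\beta_1(1-\delta/2)$ on $\|\mathbf{m}^{t-1}-\nabla f(\mathbf{x}^{t-1})\|^2$: using $2\langle a,b\rangle\le \rho\|a\|^2+\rho^{-1}\|b\|^2$ with $\rho$ of order $\beta_1\delta$ converts the $(1-\beta_1)^2$ coefficient into $(1-\beta_1)^2(1+\rho)\le 1-\beta_1(1-\delta/2)$ for small enough $\beta_1$, while the reciprocal $\rho^{-1}$ of order $1/(\beta_1\delta)$ multiplies $L^2\|\mathbf{x}^t-\mathbf{x}^{t-1}\|^2$, producing the $\frac{2L^2}{\delta\beta_1}$ coefficient in the statement. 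The cross terms involving the $\beta_1(\hat{\mathbf{g}}^t-\nabla f(\mathbf{x}^t))$ piece are order $\beta_1$ relative to $\beta_1^2$ for the squared term, which explains the $(\beta_1+6\beta_1^2)(1-\delta)$ and $3(2-\delta)\beta_1^2\sigma^2$ coefficients once all the Young's-inequality constants are collected.

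The step I expect to be the main obstacle is bookkeeping the constants so that they match exactly: there are several free parameters in the Young's inequality applications (one for the contraction/smoothness split, one or two for distributing the quantization noise across the bias and variance parts), and they must be chosen consistently so the $\|\mathbf{m}^{t-1}-\nabla f(\mathbf{x}^{t-1})\|^2$ coefficient comes out to precisely $1-\beta_1(1-\delta/2)$ and not something looser. I would handle this by first deriving the inequality with generic parameters, then back-solving for the parameter values that give the cleanest contraction factor, and finally verifying that the induced constants on the remaining three terms are no larger than those claimed (absorbing lower-order $\beta_1$ contributions into the $6\beta_1^2$ and $3(2-\delta)\beta_1^2$ slack, which is why the statement has these slightly loose-looking constants rather than tight ones). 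Taking total expectation at the end removes the conditioning and yields the stated bound.
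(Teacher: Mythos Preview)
Your proposal is correct and follows essentially the same route as the paper: the same three-term decomposition of $\mathbf{m}^t-\nabla f(\mathbf{x}^t)$, the same bias/variance split of $\hat{\mathbf{g}}^t-\nabla f(\mathbf{x}^t)$ (so the zero-mean part decouples in expectation), and the same Young's-inequality choice of parameter $\sim\delta\beta_1$ to manufacture the contraction factor $1-\beta_1(1-\delta/2)$ while pushing the $2L^2/(\delta\beta_1)$ coefficient onto the drift term. The only cosmetic difference is that the paper first applies Jensen's inequality to the convex combination $(1-\beta_1)(\cdot)+\beta_1(\cdot)$ before invoking Young's (so it works with a leading factor $(1-\beta_1)$ rather than your $(1-\beta_1)^2$), and it bounds the variance via the three-way split $\hat{\mathbf{g}}^t-\mathbb{E}[\hat{\mathbf{g}}^t]=(\hat{\mathbf{g}}^t-\mathbf{g}^t)+(\mathbf{g}^t-\nabla f(\mathbf{x}^t))+(\nabla f(\mathbf{x}^t)-\mathbb{E}[\hat{\mathbf{g}}^t])$ rather than through $\|\hat{\mathbf{g}}^t\|^2\lesssim\|\mathbf{g}^t\|^2$; the three-way split is what produces the $(1-\delta)$ factor in the $6\beta_1^2(1-\delta)$ coefficient, so you will want to adopt that refinement when matching constants exactly.
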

\begin{proof}
	According to the update of momentum~\ref{adaptrule}, we have
	\begin{align}
		\mathbf{m}^{t}-\nabla f(\mathbf{x}^{t})=&(1-\beta_1)(\mathbf{m}^{t-1}-\nabla f(\mathbf{x}^{t-1})+\nabla f(\mathbf{x}^{t-1})-\nabla f(\mathbf{x}^t))+\beta_1(\hat{\mathbf{g}}^t-\nabla f(\mathbf{x}^t)).\nonumber
	\end{align}
	Taking expectation we have
	\begin{align}
		\mathbb{E}[\|\mathbf{m}^t-\nabla f(\mathbf{x}^t)\|_2^2]=&\mathbb{E}[\|(1-\beta_1)(\mathbf{m}^{t-1}-\nabla f(\mathbf{x}^{t-1})+\nabla f(\mathbf{x}^{t-1})-\nabla f(\mathbf{x}^t))+\beta_1(\mathbb{E}[\hat{\mathbf{g}}^t]-\nabla f(\mathbf{x}^t))\|_2^2]\nonumber\\
		&+\beta_1^2\mathbb{E}[\|\hat{\mathbf{g}}^t-\mathbb{E}[\hat{\mathbf{g}}^t]\|_2^2].\label{eq:pflm-m-1}
	\end{align}
	For the first term, applying Jensen's inequality yields
	\begin{align}
		&\mathbb{E}[\|(1-\beta_1)(\mathbf{m}^{t-1}-\nabla f(\mathbf{x}^{t-1})+\nabla f(\mathbf{x}^{t-1})-\nabla f(\mathbf{x}^t)+\beta_1(\mathbb{E}[\hat{\mathbf{g}}^t]-\nabla f(\mathbf{x}^t))\|_2^2]\nonumber\\
		\le&(1-\beta_1)\mathbb{E}[\|\mathbf{m}^{t-1}-\nabla f(\mathbf{x}^{t-1})+\nabla f(\mathbf{x}^{t-1})-\nabla f(\mathbf{x}^t)\|_2^2]+\beta_1\mathbb{E}[\|\mathbb{E}[\hat{\mathbf{g}}^t]-\nabla f(\mathbf{x}^t)\|_2^2].\label{eq:pflm-m-2}
	\end{align}
	By Young's inequality, we have
	\begin{align}
		\mathbb{E}[\|\mathbf{m}^{t-1}-\nabla f(\mathbf{x}^{t-1})+\nabla f(\mathbf{x}^{t-1})-\nabla f(\mathbf{x}^t)\|_2^2]\le&\left(1+\frac{\delta\beta_1}{2}\right)\mathbb{E}[\|\mathbf{m}^{t-1}-\nabla f(\mathbf{x}^{t-1})\|_2^2]\nonumber\\
		&+\left(1+\frac{2}{\delta\beta_1}\right)\mathbb{E}[\|\nabla f(\mathbf{x}^t)-\nabla f(\mathbf{x}^{t-1})\|_2^2].\label{eq:pflm-m-3}
	\end{align}
	For the second term, applying Cauchy's inequality yields
	\begin{align}
		\mathbb{E}[\|\hat{\mathbf{g}}^t-\mathbb{E}[\hat{\mathbf{g}}^t]\|_2^2]\le&3\mathbb{E}\|\hat{\mathbf{g}}^t-\mathbf{g}^t\|_2^2+3\mathbb{E}[\|\mathbf{g}^t-\nabla f(\mathbf{x}^t)\|_2^2]+3\mathbb{E}[\|\nabla f(\mathbf{x}^t)-\mathbb{E}[\hat{\mathbf{g}}^t]\|_2^2]\nonumber\\
		\le&3(1-\delta)\mathbb{E}[\|\nabla f(\mathbf{x}^t)\|_2^2]+3(1-\delta)\mathbb{E}[\| \mathbf{g}^t\|_2^2]+3\sigma^2,\nonumber\\
		\le&6(1-\delta)\mathbb{E}[\|\nabla f(\mathbf{x}^t)\|_2^2]+3(2-\delta)\sigma^2,\label{eq:pflm-m-4}
	\end{align}
	where the inequality uses Assumption~\ref{asp:stochastic} and~\ref{asp:contractive}.
	Applying~\eqref{eq:pflm-m-2}\eqref{eq:pflm-m-3}\eqref{eq:pflm-m-4} to~\eqref{eq:pflm-m-1} and using Assumption~\ref{asp:smoothness} and~\ref{asp:contractive}, we obtain~\ref{lm:moment_cont}.
\end{proof}

\textbf{Remark.} From this proof, it is evident that both inequalities in Assumption~\ref{asp:contractive} are necessary. In particular, the second inequality is essential for bounding the variance of $\hat{\mathbf{g}}^t$, which plays a crucial role in the overall convergence analysis.

Now we are ready to prove Theorem~\ref{thm:quantize}. We first restate the theorem in Theorem~\ref{thm:quantize-restate}.

\begin{theorem}\label{thm:quantize-restate}
	Under Assumptions~\ref{asp:proper}--\ref{asp:contractive}, if $\beta_1\in(0,\delta/(24-12\delta))$, $\delta_1\in(0,1)$ and $\eta\le\min\{1/2L,\sqrt{(\delta\beta_1^2)/(8L^2)}\}$, \sys with momentum SGD converges as
	\begin{align}
		\frac{1}{T+1}\sum_{t=0}^T\mathbb{E}[\|\nabla f(\mathbf{x}^t)\|_2^2]\le&\frac{8[f(\mathbf{x}^0)-\inf_{\mathbf{x}}f(\mathbf{x})]}{\delta\eta(T+1)}+\frac{8\|\mathbf{m}^0-\nabla f(\mathbf{x}^0)\|_2^2}{\delta\beta_1(T+1)}+\frac{24\beta_1\sigma^2}{\delta}.\label{eq:thm-restate}
	\end{align}
\end{theorem}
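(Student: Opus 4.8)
The plan is to combine the descent lemma (Lemma~\ref{descent_lemma}) with the momentum contraction lemma (Lemma~\ref{lm:moment_cont}) via a potential function argument. First I would fix the Lyapunov function
\[
V^t := f(\mathbf{x}^t) + c\,\|\mathbf{m}^t - \nabla f(\mathbf{x}^t)\|_2^2
\]
for a constant $c>0$ to be chosen (I expect $c$ of order $\eta/(\delta\beta_1)$). Taking expectations and adding $c$ times the inequality of Lemma~\ref{lm:moment_cont} to the inequality of Lemma~\ref{descent_lemma}, the momentum-error term $\frac{\eta}{2}\|\mathbf{m}^t-\nabla f(\mathbf{x}^t)\|_2^2$ from the descent lemma must be absorbed by the contraction factor $-c\beta_1(1-\delta/2)\|\mathbf{m}^{t-1}-\nabla f(\mathbf{x}^{t-1})\|_2^2$; this forces a lower bound on $c$. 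Simultaneously the term $\frac{2cL^2}{\delta\beta_1}\|\mathbf{x}^t-\mathbf{x}^{t-1}\|_2^2$ produced by Lemma~\ref{lm:moment_cont} must be dominated by the negative term $-(\frac{1}{2\eta}-\frac{L}{2})\|\mathbf{x}^{t+1}-\mathbf{x}^t\|_2^2$ from the descent lemma; since these are at consecutive indices one reindexes the sum (or uses a telescoping bookkeeping), and the condition $\eta \le \frac{1}{2L}$ makes the step-size coefficient $\frac{1}{2\eta}-\frac{L}{2}\ge \frac{1}{4\eta}$, while $\eta \le \sqrt{\delta\beta_1^2/(8L^2)}$ is exactly what is needed to make $\frac{2cL^2}{\delta\beta_1}\le \frac{1}{4\eta}$ with the chosen $c$.

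Next I would collect the coefficient in front of $\mathbb{E}[\|\nabla f(\mathbf{x}^t)\|_2^2]$: the descent lemma contributes $-\frac{\eta}{2}$, and Lemma~\ref{lm:moment_cont} contributes $+c(\beta_1+6\beta_1^2)(1-\delta)$. The constraint $\beta_1 \in (0,\delta/(24-12\delta))$ is precisely the assumption that makes $(\beta_1 + 6\beta_1^2)(1-\delta)$ small enough — together with the choice of $c$ — so that the net coefficient of the gradient-norm term is at most $-\frac{\delta\eta}{8}$ (or some comparable negative constant times $\eta$). After this, summing the one-step inequality $\mathbb{E}[V^{t+1}] \le \mathbb{E}[V^t] - \frac{\delta\eta}{8}\mathbb{E}[\|\nabla f(\mathbf{x}^t)\|_2^2] + (\text{noise})$ from $t=0$ to $T$ telescopes, the $V$ terms collapse to $V^0 - \mathbb{E}[V^{T+1}] \le V^0 - \inf_{\mathbf{x}} f(\mathbf{x})$ (using Assumption~\ref{asp:proper} and $c\|\cdot\|^2\ge 0$), and dividing by $\frac{\delta\eta (T+1)}{8}$ yields
\[
\frac{1}{T+1}\sum_{t=0}^T \mathbb{E}[\|\nabla f(\mathbf{x}^t)\|_2^2] \le \frac{8[f(\mathbf{x}^0)-\inf_{\mathbf{x}}f(\mathbf{x})]}{\delta\eta(T+1)} + \frac{8c\,\|\mathbf{m}^0-\nabla f(\mathbf{x}^0)\|_2^2}{\delta\eta(T+1)} + \frac{8}{\delta\eta}\cdot 3(2-\delta)\beta_1^2 c\,\sigma^2.
\]
Plugging in $c \asymp \eta/(\delta\beta_1)$ turns the second term into $\frac{8\|\mathbf{m}^0-\nabla f(\mathbf{x}^0)\|_2^2}{\delta\beta_1(T+1)}$ and the noise term into $\mathcal{O}(\beta_1\sigma^2/\delta)$, matching \eqref{eq:thm-restate} up to tracking the precise numerical constant $24$.

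The main obstacle is the bookkeeping of the several competing constants simultaneously: the single choice of $c$ (and the ranges of $\beta_1$ and $\eta$) must make three separate absorptions work at once — (a) the momentum-error term into the contraction, (b) the displacement term $\|\mathbf{x}^t-\mathbf{x}^{t-1}\|_2^2$ into the descent lemma's negative quadratic (which requires care with the index shift, since one appears at time $t$ and the other at $t+1$), and (c) the $(1-\delta)\|\nabla f\|_2^2$ term into the $-\frac{\eta}{2}\|\nabla f\|_2^2$ term. Verifying that the stated hypotheses $\beta_1 < \delta/(24-12\delta)$ and $\eta \le \min\{1/(2L), \sqrt{\delta\beta_1^2/(8L^2)}\}$ are exactly the right thresholds — and extracting the clean constant $8$ in the dominant terms and $24$ in the noise term — is the delicate part; everything else is routine Young/Jensen/Cauchy manipulation already carried out in the two lemmas.
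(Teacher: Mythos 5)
Your proposal is correct and follows essentially the same route as the paper: the paper also combines Lemma~\ref{descent_lemma} with Lemma~\ref{lm:moment_cont}, performs exactly the three absorptions you identify (with $\eta\le 1/(2L)$ and $\eta\le\sqrt{\delta\beta_1^2/(8L^2)}$ handling the displacement term and $\beta_1<\delta/(24-12\delta)$ giving the net coefficient $-\delta\eta/8$ on the gradient norm), the only cosmetic difference being that the paper sums each lemma over $t$ separately and then substitutes, rather than forming the per-step Lyapunov function $V^t$ and telescoping. Your constant $c$ should be taken as $\eta/(\beta_1(2-\delta))$ (so that $3(2-\delta)\beta_1^2 c$ collapses to $3\eta\beta_1$ and the noise constant comes out exactly $24$), but this is the order you anticipated.
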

\begin{proof}
	By Lemma~\ref{descent_lemma}, we have
	\begin{align}
		f(\mathbf{x}^{t+1})-f(\mathbf{x}^t)\le&-\left(\frac{1}{2\eta}-\frac{L}{2}\right)\|\mathbf{x}^{t+1}-\mathbf{x}^t\|_2^2+\frac{\eta}{2}\|\nabla f(\mathbf{x}^t)-\mathbf{m}^t\|_2^2-\frac{\eta}{2}\|\nabla f(\mathbf{x}^t)\|_2^2.\label{eq:pfthm-1}
	\end{align}
	Taking expectation and summing~\eqref{eq:pfthm-1} for $t=0,1,\cdots,T$ yields
	\begin{align}
		\inf_{\mathbf{x}}f(\mathbf{x})-f(\mathbf{x}^0)\le&\frac{\eta}{2}\sum_{t=0}^{T}\mathbb{E}[\|\nabla f(\mathbf{x}^t)-\mathbf{m}^t\|_2^2]-\left(\frac{1}{2\eta}-\frac{L}{2}\right)\sum_{t=0}^{T}\mathbb{E}[\|\mathbf{x}^{t+1}-\mathbf{x}^t\|_2^2]\nonumber\\
		&-\frac{\eta}{2}\sum_{t=0}^T\mathbb{E}[\|\nabla f(\mathbf{x}^t)\|_2^2].\label{eq:pfthm-2}
	\end{align}
	summing the inequality in Lemma~\ref{lm:moment_cont} for $t=1,2,\cdots,T$ we have
    \begin{align}
    	\beta_1\left(1-\frac{\delta}{2}\right)\sum_{t=0}^T\mathbb{E}[\|\mathbf{m}^t-\nabla f(\mathbf{x}^t)\|_2^2]\le& \|\mathbf{m}^0-\nabla f(\mathbf{x}^0)\|_2^2+\frac{2L^2}{\delta\beta_1}\sum_{t=1}^T\|\mathbf{x}^t-\mathbf{x}^{t-1}\|_2^2\nonumber\\
    	&+\left(1-\delta\right)(\beta_1+6\beta_1^2)\sum_{t=1}^T\mathbb{E}[\|\nabla f(\mathbf{x}^t)\|_2^2]+3T(2-\delta)\beta_1^2\sigma^2.\label{eq:m_sum1}
    \end{align}
	 noting that $\delta\in(0,1)$ we obtain
	 \begin{align}
		\sum_{t=0}^T\mathbb{E}[\|\mathbf{m}^t-\nabla f(\mathbf{x}^t)\|_2^2]\le&\frac{2\|\mathbf{m}^0-\nabla f(\mathbf{x}^0)\|_2^2}{\beta_1}+\frac{4L^2}{\delta\beta_1^2}\sum_{t=1}^T\|\mathbf{x}^t-\mathbf{x}^{t-1}\|_2^2\nonumber\\
		&+\left(1-\frac{\delta}{2}\right)(1+6\beta_1)\sum_{t=1}^T\mathbb{E}[\|\nabla f(\mathbf{x}^t)\|_2^2]+6T\beta_1\sigma^2.\label{eq:m_sum}
	\end{align}
	Applying~\ref{eq:m_sum} to~\eqref{eq:pfthm-2} and noting that $\beta_1\in(0,\delta/(24-12\delta))$ implies $(1-\delta/2)(1+6\beta_1)\le1-\delta/4$, we obtain
	\begin{align}
		\frac{1}{T+1}\sum_{t=0}^T\mathbb{E}[\|\nabla f(\mathbf{x}^t)\|_2^2]\le&\frac{8[f(\mathbf{x}^0)-\inf_{\mathbf{x}}f(\mathbf{x})]}{\delta\eta(T+1)}+\frac{8\|\mathbf{m}^0-\nabla f(\mathbf{x}^0)\|_2^2}{\delta\beta_1(T+1)}+\frac{24\beta_1\sigma^2}{\delta}\nonumber\\
		&-\frac{8}{\delta\eta}\left(\frac{1}{2\eta}-\frac{L}{2}-\frac{2\eta L^2}{\delta\beta_1^2}\right)\sum_{t=0}^T\|\mathbf{x}^{t+1}-\mathbf{x}^t\|_2^2.\label{eq:pfthm-3}
	\end{align}
	Since $\eta\le\min\{1/2L,\sqrt{(\delta\beta_1^2)/(8L^2)}\}$ implies $1/(4\eta)\ge L/2$ and $1/(4\eta)\ge(2\eta L^2)/(\delta\beta_1^2)$,~\eqref{eq:thm-restate} is a direct result of~\eqref{eq:pfthm-3}.
\end{proof}

\section{Supplementary Pretraining: \texttt{LLaMA-8B} on \texttt{Proof-Pile}}
\label{app:llama8b}

For completeness we also include a from-scratch pretraining run of \texttt{LLaMA-8B} on the \texttt{Proof-Pile} mathematical corpus, using the configuration described in the ``Pretraining'' paragraph of Appendix~\ref{app:exp_details} ($20k$ iterations, global batch size $131{,}072$ tokens, $\sim$2.5\,B tokens total, peak learning rate $1.5{\times}10^{-4}$ with cosine decay, weight decay $0.1$, PP=4).

We present this larger-model run as a supplementary scale check and use the smaller \texttt{LLaMA-3.2-1B} run as the main-text pretraining benchmark. The reason is statistical efficiency rather than model size alone: under Chinchilla-style scaling laws~\cite{hoffmann2022training}, the \texttt{LLaMA-3.2-1B} run receives substantially more tokens per parameter ($\sim$5) than the \texttt{LLaMA-8B} run ($\sim$0.3), making it the more informative setting for detecting mature pretraining-quality differences under the available token budget.

Figure~\ref{fig:exp_llama8b_proofpile} reports the training loss and the held-out validation perplexity for this run. The training-loss curve (Figure~\ref{fig:exp_llama8b_loss}) covers the full $\sim$0.4--2.5\,B-token horizon after the initial warmup transient is dropped (matching the slicing convention used in~\cite{wang2022fine}). The validation-PPL curve (Figure~\ref{fig:exp_llama8b_ppl}) shows perplexity at $0.5\,$B-spaced checkpoints over the $0.5$--$2.5\,$B-token range. Across the entire horizon, \sys is essentially indistinguishable from the uncompressed \texttt{FP16} baseline: the validation-PPL gap stays within $\pm 0.013$ at every measured checkpoint and shows no upward drift, indicating that the per-step quantization error of \sys does not visibly accumulate even at the 8B-parameter scale.

\begin{figure}[h]
    \centering
    \begin{subfigure}[b]{0.46\textwidth}
        \centering
        \includegraphics[width=1.0\textwidth]{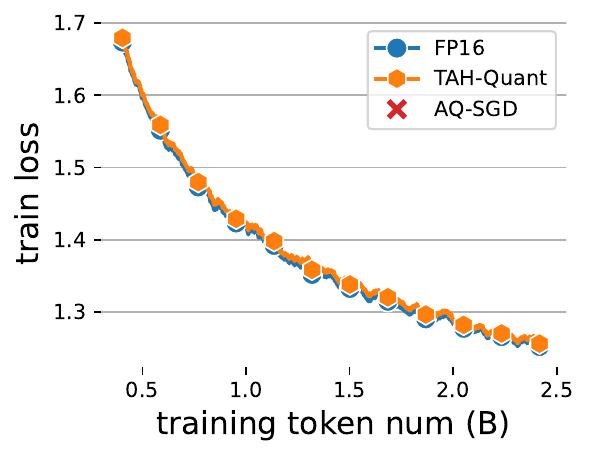}
        \caption{Training loss vs.\ tokens.}
        \label{fig:exp_llama8b_loss}
    \end{subfigure}
    \hfill
    \begin{subfigure}[b]{0.46\textwidth}
        \centering
        \includegraphics[width=1.0\textwidth]{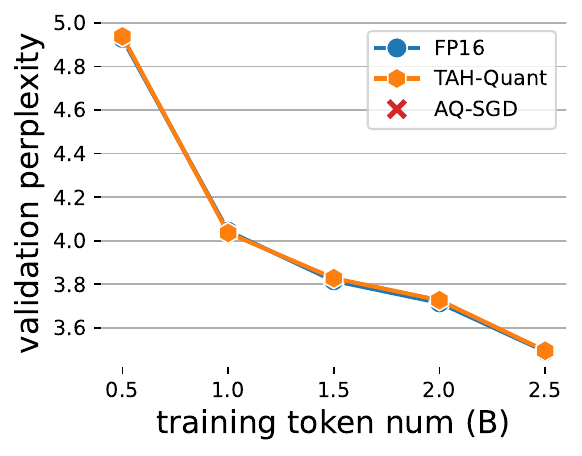}
        \caption{Validation perplexity vs.\ tokens.}
        \label{fig:exp_llama8b_ppl}
    \end{subfigure}
    \caption{Pretraining \texttt{LLaMA-8B} from scratch on \texttt{Proof-Pile} (PP=4, global batch size $32$, sequence length $4096$). \sys (orange) tracks the uncompressed \texttt{FP16} baseline (blue) on both training loss (a) and held-out validation perplexity (b). \texttt{AQ-SGD} is shown in the legend for reference but is infeasible on this single-epoch pretraining workload (its activation cache would require storing per-microbatch forward activations for the entire corpus).}
    \label{fig:exp_llama8b_proofpile}
\end{figure}

\section{Micro-benchmarks: Sweeping Micro-batch Size and Bandwidth}
\label{app:microbench}

We fix the global batch size to 32 and sweep the micro-batch size (mbs) and inter-stage bandwidth.
Tables~\ref{tab:microbench-500}--\ref{tab:microbench-1g} report throughput (tokens/s) and speedup.

\begin{table}[H]
\centering
\caption{Throughput and speedup under 500 Mbps (global batch = 32).}
\label{tab:microbench-500}
\begin{tabular}{lccc}
\toprule
 tokens / s& w/o \sys & w/ \sys & Speedup \\
\midrule
mbs = 1 & 2406 & 4289 & 1.78$\times$ \\
mbs = 2 & 1710 & 4016 & 2.35$\times$ \\
mbs = 4 & 1317 & 3023 & 2.30$\times$ \\
mbs = 8 & 981  & 2099 & 2.14$\times$ \\
\bottomrule
\end{tabular}
\end{table}

\begin{table}[H]
\centering
\caption{Throughput and speedup under 1 Gbps (global batch = 32).}
\label{tab:microbench-1g}
\begin{tabular}{lccc}
\toprule
 tokens / s& w/o \sys & w/ \sys & Speedup \\
\midrule
mbs = 1 & 3449 & 5177 & 1.50$\times$ \\
mbs = 2 & 2576 & 4655 & 1.81$\times$ \\
mbs = 4 & 2003 & 3475 & 1.73$\times$ \\
mbs = 8 & 1485 & 2436 & 1.64$\times$ \\
\bottomrule
\end{tabular}
\end{table}

\textbf{Discussion.}
\sys yields larger speedups under lower bandwidth since inter-stage communication dominates (up to 2.35$\times$ at 500 Mbps).
Speedup increases with micro-batch size and peaks at mbs=2--4, where communicated activation volume per step is larger.
At a very large micro-batch size (mbs=8), increased pipeline idle time slightly reduces the speedup.

\section{Sensitivity of outlier threshold $\tau$}\label{app:tau}
We evaluate the robustness of the outlier detection threshold $\tau$ in Eq.~\ref{eq:tau} by varying $\tau$.
$\tau=0$ corresponds to always applying the transform (treating every tile as containing outliers), while $\tau=\infty$ corresponds to never applying it.
Table~\ref{tab:tau_ablation} shows that $\tau=2.0$ yields the most stable and fastest loss decrease.

\begin{table}[h]
\centering
\small
\begin{tabular}{lccccc}
\toprule
Loss & Step 100 & Step 200 & Step 300 & Step 400 & Step 500 \\
\midrule
$\tau=0$ (always transform) & 2.97 & 2.79 & 2.63 & 2.60 & 2.60 \\
$\tau=2$ (ours)             & \textbf{2.72} & \textbf{2.63} & \textbf{2.59} & \textbf{2.56} & \textbf{2.55} \\
$\tau=4$                    & 2.73 & 2.65 & 2.61 & 2.58 & 2.57 \\
$\tau=\infty$ (never transform) & 2.82 & 2.72 & 2.68 & 2.64 & 2.63 \\
\bottomrule
\end{tabular}
\caption{Effect of the outlier detection threshold $\tau$ on training loss.}
\label{tab:tau_ablation}
\end{table}

\section{Additional Bit-Allocation Studies}
\label{app:bit_sweep}

\subsection{Varying INT4/INT3 Ratios}
Table~\ref{tab:int4_int3_ratio} reports training losses under different INT4/INT3 splitting ratios. The mixed setting lies between the two extremes, and increasing the INT4 fraction yields more stable convergence in this sweep.

\begin{table}[H]
\centering
\small
\setlength{\tabcolsep}{4pt}
\begin{tabular}{lcccccccc}
\toprule
\textbf{Loss} & step 50 & step 100 & step 150 & step 200 & step 300 & step 400 & step 500 & step 600 \\
\midrule
100\% INT4 & 2.78 & 2.70 & 2.66 & 2.61 & 2.56 & 2.52 & 2.51 & 2.49 \\
50\% INT4 + 50\% INT3 & 2.82 & 2.74 & 2.71 & 2.66 & 2.62 & 2.58 & 2.57 & 2.55 \\
100\% INT3 & 2.85 & 2.77 & 2.74 & 2.69 & 2.65 & 2.62 & 2.60 & 2.58 \\
\bottomrule
\end{tabular}
\caption{Loss trajectories under different INT4/INT3 ratio settings.}
\label{tab:int4_int3_ratio}
\end{table}

\subsection{Extremely Low-bit Configuration with INT2}
We additionally evaluate a more aggressive configuration involving INT2. Table~\ref{tab:int2_instability} shows that the INT2-involved setting exhibits degraded behavior in this experiment, consistent with instability under extremely low-bit quantization.

\begin{table}[H]
\centering
\small
\setlength{\tabcolsep}{6pt}
\begin{tabular}{lccccccc}
\toprule
\textbf{Loss} & step 0 & step 100 & step 200 & step 300 & step 400 & step 500 & step 600 \\
\midrule
80\% INT2 + 20\% INT3 & 3.31 & 3.22 & 3.20 & 3.19 & 3.17 & 3.18 & 3.15 \\
\bottomrule
\end{tabular}
\caption{Loss trajectories with INT2-involved quantization, where training becomes unstable.}
\label{tab:int2_instability}
\end{table}

\section{Backward Gradients under Ultra-low Precision}
\label{app:backward}

In the main text, we quantize backward gradients with a higher-bit fixed-point compressor to strike a practical balance between system efficiency and numerical stability.
Here we stress-test backward quantization under an ultra-low precision budget (4-bit).
As shown in Table~\ref{tab:backward-4bit}, the naive 4-bit fixed-point scheme diverges, whereas applying \sys to backward gradients remains stable and continues to converge.

\begin{table}[H]
\centering
\small
\begin{tabular}{lccccccc}
\toprule
 \textbf{Loss} & step 0 & step 100 & step 200 & step 300 & step 400 & step 500 & step 600 \\
\midrule
\texttt{\sys-4bit}   & 2.87 & 2.71 & 2.63 & 2.59 & 2.56 & 2.55 & 2.52 \\
\texttt{NAIVE-4bit}  & 2.87 & 3.04 & 3.16 & 3.21 & 3.49 & 4.53 & 6.04 \\
\bottomrule
\end{tabular}
\caption{Training loss with 4-bit backward gradient quantization.}
\label{tab:backward-4bit}
\end{table}

\paragraph{Numerical stability in long-horizon training.}
During long-horizon pretraining, we observed a single instability event where the loss became \texttt{NaN} around 1.9B tokens when using an 8-bit fixed-point compressor for backward gradients.
Our investigation indicates that the issue is not caused by the forward \sys design (activation quantization), but by insufficient numerical precision in the backward fixed-point path.
Increasing the backward precision from 8-bit to 10-bit and resuming from the checkpoint eliminated the issue and restored stable training.
This motivates studying more robust backward quantization under aggressive bit budgets; as shown in Table~\ref{tab:backward-4bit}, \sys remains stable even at 4-bit, whereas naive fixed-point quantization diverges.

\paragraph{Gradient scaling.}
Gradient magnitudes can be small in practice (e.g., on the order of $10^{-5}$ in our setting), which makes ultra-low-bit quantization particularly sensitive to rounding and underflow.
To improve numerical robustness, we rescale gradients before quantization and invert the scaling after dequantization:
\begin{equation}
\hat{\mathbf{g}} = Q(c\mathbf{g})/c,
\end{equation}
where $Q(\cdot)$ denotes the chosen quantizer and $c$ is a constant scaling factor.

\bibliographystyle{unsrt}
\bibliography{ref}

\end{document}